\documentclass[11pt]{article}
\usepackage[margin=1in]{geometry}
\usepackage[T1]{fontenc}
\usepackage{mathptmx}
\usepackage{microtype}
\usepackage{amsmath,amssymb,amsthm}
\usepackage{graphicx}
\usepackage{booktabs}
\usepackage[numbers,sort&compress]{natbib}
\usepackage{xcolor}
\usepackage[colorlinks=true,linkcolor=blue!50!black,citecolor=blue!50!black,urlcolor=blue!50!black]{hyperref}
\usepackage{enumitem}
\setlist{itemsep=2pt,topsep=4pt}
\newtheorem{proposition}{Proposition}
\usepackage{caption}
\usepackage{ms_style}
\shorttitle{Rubrics Carry the Standard}
\kicker{Microsoft \textperiodcentered{} Preprint}

\title{Rubrics Carry the Standard:\\Shapley Attribution for Checklist-Based LLM Judges}
\author{Saeid Asgari, Rui Ying, Sara Malvar, Rakshanda Agarwal, Bruce Sun, Claire Wang, \\ Leonardo de Oliveira Nunes, Ranveer Chandra \\   \ \ \\
\texttt{saeidasgari@microsoft.com}}

\begin{document}
\maketitle

\begin{abstract}
Checklist rubrics applied by large language model (LLM) judges have become a standard instrument for evaluating open-ended outputs and for rewarding them during training. Two assumptions surround this practice: that leniency or strictness is a property of the judge model, and that a rubric can be improved by asking a model to review it. We test both in a controlled grading task with planted flaws and on two human-labeled datasets, SummEdits and RAGTruth. Holding the judge and the outputs fixed, rubric content moves the pass rate of correct outputs from 2\% to 100\%, whereas changing the judge model or the rendering of the same checks moves it by a few points. Model self-review then drifts the rubric toward the reviser's own standard: it adds requirements the task never stated and deletes checks that human labels had taught, with three reviser models and on all three datasets. To measure and control this, we show that under all-must-pass aggregation the Shapley value of each check has an exact closed form, which yields per-check catch credit, false-fail blame and unique catches in linear time. Attribution shows that self-review deletes the checks that do unique work and keeps the checks that cause false fails. A change-control gate that combines these quantities with a provenance freeze on label-derived checks holds nearly every drifting revision chain at its starting quality, with identical decisions under three judge models.
\end{abstract}

\section{Introduction}

A large language model (LLM) applying a rubric has become the default grader for open-ended work: agent trajectories, written reports, summaries, medical answers \citep{zheng2023judging,arora2025healthbench}. Rubrics are increasingly written as checklists of atomic yes/no questions, both for evaluation \citep{cook2024tick,lee2024checkeval} and as reward signals for reinforcement learning \citep{gunjal2025rubrics,viswanathan2025checklists}. Two assumptions have grown up around this practice. The first is that leniency or strictness belongs to the judge: when grading looks wrong, the remedy is a stronger model or a sterner prompt. The second is that a rubric can be improved by asking a model to review it for ambiguity and gaps, a step that is cheap, feels like diligence, and is often run in a loop.

This paper tests both assumptions under controlled conditions and finds that each misplaces the standard. The standard a grader enforces is written in the rubric's content; the judge largely executes it, and unsupervised model revision moves it.

Our primary testbed is a security reconnaissance task: an agent inspects a synthetic cloud tenant and writes a report of security posture issues, and a judge grades the report against the tenant's known inventory. Because every flaw is planted, every report carries a label, and because we author the rubrics, we can vary one factor at a time. We replicate the central findings on two public datasets with human labels: SummEdits \citep{laban2023llms} for summary consistency and RAGTruth \citep{wu2024ragtruth} for faithfulness in retrieval-augmented generation (RAG).

Four findings organize the paper. First, rubric content sets strictness (Section~\ref{sec:strict}). Hand-written checks pass 96 to 98\% of correct reports and model-generated checks for the same environments pass 2 to 8\%, under every rendering we tried and under three judge models that agree within a few points. The generated checks are not harsher in tone; they require content the task never asked for, and under all-must-pass aggregation each such check multiplies the pass rate of correct outputs by its own. On RAGTruth, ten generic faithfulness checks pass 41\% of responses that human annotators judged faithful, against 81\% for a free-form verdict. A validation set dominated by planted defects hides all of this, since a rubric that fails everything scores 89\% accuracy on a set that is 90\% flawed.

Second, for all-must-pass rubrics the Shapley value \citep{shapley1953value} of each check has an exact closed form (Section~\ref{sec:shapley}): a report's outcome is divided equally among the checks that failed it. This gives every check a catch credit, a false-fail blame and a count of unique catches at the cost of one pass over the judge's answers, and it decomposes a rubric's two headline rates exactly.

Third, model self-review drifts a rubric toward the reviser's own standard (Section~\ref{sec:drift}). From a hand-written rubric it adds requirements such as remediation advice, and correct passed falls from 97\% to 6\%. From a rubric that human labels had taught to fail any report omitting a real risk, it deletes that coverage check as ``vague'', and the catch rate on unanticipated omissions falls from 100\% to 8 to 25\%. Holding the check count fixed does not prevent the drift; only delete-only revision was safe. The effect appears with Claude Opus, Sonnet and Haiku as revisers and on both human-labeled datasets. Attribution exposes the mechanism: of 186 checks that self-review deleted, 78 held a catch no other check made and 2 carried any false-fail blame.

Fourth, rubric revisions can be gated like code changes (Section~\ref{sec:gates}). We compare a regression test on labeled calibration data, a provenance freeze with a label-free screen, a Shapley gate, and their combination. Only the combination protects both failure directions: on held-out environments it keeps 9 of 10 drifting chains within 5 points of their starting quality, with decisions unchanged across three judge models, and on RAGTruth it holds the label-taught rubric at its starting quality.

\paragraph{Contributions.}
\begin{enumerate}
\item A controlled decomposition of judge strictness into rubric content, rendering, aggregation rule and judge model, showing that content dominates, with a prevalence analysis of why accuracy conceals it.
\item An exact, closed-form Shapley attribution for all-must-pass rubrics that yields per-check catch credit, false-fail blame and unique catches in $O(nk)$ time.
\item Evidence, across three reviser models, three revision constraints and three datasets, that model self-review of a rubric drifts toward the reviser's own standard, together with an attribution-based account of what it removes.
\item Change-control gates for rubric revision, evaluated with leave-one-out labels on 12 synthetic and 8 human-labeled revision chains and under three judge models.
\end{enumerate}

\section{Related work}
\label{sec:related}

\paragraph{LLM judges and their biases.} LLM-as-a-judge was popularized by MT-Bench and Chatbot Arena \citep{zheng2023judging}. Documented failure modes include position bias \citep{wang2023fair}, self-preference \citep{panickssery2024llm} and correlated errors across models \citep{kim2025correlated}; panels of diverse judges reduce some of them \citep{verga2024replacing}, and fine-tuned evaluators such as Prometheus condition on a rubric \citep{kim2023prometheus}. JudgeBench measures judge accuracy \citep{tan2024judgebench}, and the alternative annotator test asks when a judge may replace a human annotator \citep{calderon2025alternative}. This literature holds the rubric fixed and varies the judge; we do the reverse and find the rubric to be the larger factor. \citet{bagaria2026judging} report that rubric text alone predicts judge labels on HealthBench and ResearchRubrics, consistent with the rubric carrying the standard.

\paragraph{Checklists and generated rubrics.} TICK \citep{cook2024tick} and CheckEval \citep{lee2024checkeval} decompose evaluation into generated yes/no checklists; HealthBench uses physician-written criteria \citep{arora2025healthbench}. Rubrics and checklists serve as rewards \citep{gunjal2025rubrics,viswanathan2025checklists}, and recursive decomposition with filtering of misaligned criteria improves rubric-based judges \citep{shen2026rethinking}. Recent meta-evaluations find generated rubrics coherent within a model family but poorly transferable \citep{siro2026learning}, and longer, finer-grained and skewed toward high scores than human-written ones \citep{hong2026can}; RIFT offers a taxonomy of rubric failure modes with automated diagnostics \citep{qi2026rift}. Our contribution is causal: a comparison of hand-written and generated checks under fixed judges and outputs that isolates task-irrelevant requirements as the source of strictness.

\paragraph{Rubric refinement and drift.} EvalGen observed criteria drift, in which users' criteria shift as they grade, and proposed aligning assertions with human grades \citep{shankar2024validates}; reflect-and-revise refines essay-scoring rubrics from disagreements with human scores \citep{harada2025automated}. Closest to our work, \citet{ding2026rubrics} show that criterion-preserving rubric edits that pass standard validation can shift a judge's preferences on target domains by up to 27.9\% and that the shift propagates through alignment; Rulers prevents per-instance reinterpretation by freezing a rubric into a hashed specification \citep{hong2026rulers}. We study a different source of drift, model self-review without labels, characterize its direction, and contribute per-check attribution and gates that decide which edits to accept rather than freezing the whole rubric. That self-review fails without an external signal echoes results on self-correction in reasoning \citep{huang2024large} and contrasts with the gains reported for self-refinement of outputs \citep{madaan2023selfrefine}.

\paragraph{Rubrics as rewards.} Optimizing against a proxy reward overfits it \citep{gao2023scaling}. Rubric-based reinforcement learning can be exploited through partially satisfied compound criteria \citep{mahmoud2026reward}, and generated rubrics can reward dishonest answers on unanswerable tasks \citep{xie2026impossiblerubrics}; Constitutional AI relies on model critique against written principles \citep{bai2022constitutional}. Our drift result implies that a rubric revised by the model family it grades can absorb that family's own standard without any optimization pressure.

\paragraph{Shapley attribution.} Shapley values \citep{shapley1953value} underlie feature attribution \citep{lundberg2017unified} and data valuation \citep{ghorbani2019data}, where they are usually approximated by sampling. For the all-must-pass rubric game the value is exact and closed-form, which makes per-check attribution cheap enough to run inside a revision loop.

\section{Experimental setup}
\label{sec:setup}

\subsection{Tasks and datasets}

\paragraph{Recon.} We built 12 synthetic cloud environments (storage exposure, legacy authentication, dangling Domain Name System (DNS) records, guest sharing, open management ports, leaked secrets and others). Each has a ground-truth inventory with two planted findings and several minor, optional observations. \emph{Recon-72}, used for the strictness experiments, holds 2 correct reports and 4 single-flaw reports per environment (a missed finding, a fabricated finding, a finding on the wrong resource, an overstated impact): 24 correct, 48 flawed. \emph{Recon-108}, used for all revision and gating experiments, adds per environment three reports judged against an extended inventory that contains one additional real exposure outside the planted scenario: a correct report that mentions it, a report with a fabricated finding, and an \emph{unplanned miss}, the original correct report, which now omits a real exposure. Rubric authors and revisers never saw the extended inventories. Unplanned misses test whether a rubric can fail a report for omitting something its author did not anticipate. Recon-108 holds 36 correct and 72 flawed reports. For gating, 4 environments (36 reports) serve as calibration and 8 (24 correct, 48 flawed, 8 unplanned misses) as evaluation.

\paragraph{SummEdits.} From the chat and meeting domains of SummEdits \citep{laban2023llms} we sampled 146 summaries of 38 conversations, up to two human-labeled consistent and two inconsistent summaries per conversation (73 consistent). Twelve conversations (46 items, 22 consistent) form the calibration split and the remaining 100 items the evaluation split.

\paragraph{RAGTruth.} From the RAGTruth test split \citep{wu2024ragtruth} we drew 40 sources (20 question answering with retrieved passages, 20 news summarization, each at most 3{,}000 characters) and, for each, two responses with no annotated hallucination span and two with at least one, keeping only responses the dataset marks as good quality: 160 responses, 80 faithful. Twelve sources (48 responses, 24 faithful) form the calibration split and 28 sources (112 responses, 56 faithful) the evaluation split. The responses were written by the LLMs used to build RAGTruth, not by the models we study.

\subsection{Labels}
\label{sec:labels}

Every label in this paper is a single bit per output: pass or fail. There are no per-check labels, no scores and no written rationales. On Recon the bit is known by construction (a report passes iff it is correct and complete); on SummEdits it is the human judgment consistent or inconsistent; on RAGTruth it is faithful or containing a hallucination, and although RAGTruth also marks the hallucinated span, we use only the binary label. Labels are used in two ways, always from the calibration split (36 reports on Recon, 46 summaries on SummEdits, 48 responses on RAGTruth). To \emph{teach} a rubric, the reviser sees each calibration output with its label next to the judge's verdict and per-check answers under the current rubric, with disagreements flagged, for example:
\begin{quote}\small
Report: ``Critical: storage account datalake01, container `hr-exports', has anonymous read enabled and holds payroll CSVs; \ldots{} Medium: diagnostic logging is disabled on datalake01, \ldots''\\
Human label: pass; judge with your current rubric: fail \texttt{<-- DISAGREES}; judge's answers: [Yes, Yes, Yes, Yes, No, No, No, Yes]
\end{quote}
This correct report is failed by three checks of the generated draft that ask for things it need not contain (logging on a second account, resource locks, and a recommendation to disable anonymous access); disagreements like this one are what taught the label-taught rubric that remediation and these minor items are optional. To \emph{gate} a revision, the same labels are combined with the judge's per-check answers: one pass/fail bit per output is enough to tell whether each check that answered No caught a flaw or failed a correct output, which is all the attribution of Section~\ref{sec:shapley} requires. On Recon the reviser was told the labels came from security analysts; the full teaching prompts and labeled examples from both public datasets are in Appendices~\ref{app:prompts}, \ref{app:summedits} and~\ref{app:ragtruth}.

\subsection{Judging protocol and models}

The judge sees the source (inventory, conversation, or passages), one output and the rubric, and answers each check Yes or No. An output passes if every check is Yes (all-must-pass) unless stated otherwise. Each judging batch contains at most one output per source, so the judge never compares versions of the same output, and item identifiers are opaque. Judges and revisers run as sandboxed sessions that read only their input file and write only their output file. We use Claude Haiku 5.5, Sonnet 5.5 and Opus 5.5; Sonnet is the default judge and Opus the default rubric author and reviser.

\subsection{Metrics}

We report two rates for every rubric: \emph{correct passed}, the share of correct outputs that pass, and \emph{flawed caught}, the share of flawed outputs that fail. On Recon-108 we also report \emph{unplanned caught} on the unplanned-miss slice. We avoid accuracy for the reason given in Section~\ref{sec:prev}. Paired comparisons between rubric versions use an exact McNemar test on the outputs whose verdict changed.

\subsection{Rubrics}

On Recon, the \emph{hand-written} rubric has 6 checks per environment written by the author against the planted scenario. Two rubrics were written by Opus from the inventories alone: a \emph{ranked} rubric of exactly 12 checks per environment, ordered by importance so that its prefixes give the 4- and 8-check rubrics of Section~\ref{sec:strict}, and a \emph{generated draft} of 4 to 10 checks per environment (92 in all, 7.7 per environment) that is the starting point of every revision chain from a generated rubric. The \emph{label-taught} rubric is the generated draft revised once by Opus after seeing the 36 labeled calibration reports with the judge's verdicts; it learned that remediation advice is optional and added an open coverage check (``does the report mention every other real weakness in the inventory?''), averaging 7.9 checks per environment. On SummEdits and RAGTruth, Opus wrote one ranked list of 10 generic checks per dataset from the same prompt template, and a label-taught version was produced in the same way from each calibration split. Appendix~\ref{app:recon} shows one environment with all of its reports and rubric versions, and Appendix~\ref{app:prompts} reproduces every prompt.

\section{Rubric content sets strictness}
\label{sec:strict}

\subsection{Decomposition on Recon-72}

We varied the rubric along four axes: \emph{content} (a free-form verdict, a generic 1 to 5 scale over groundedness, completeness and actionability, the hand-written checks, the generated checks), \emph{count} (nested prefixes of 2, 4 and 6 hand checks and of 4, 8 and 12 generated checks), \emph{rendering} (a yes/no checklist, one paragraph read holistically, or each check scored 1 to 5 with pass at 4 or above) and \emph{judge model}. Sonnet conditions ran twice; Haiku and Opus once.

\begin{figure}[t]
\centering
\includegraphics[width=\linewidth]{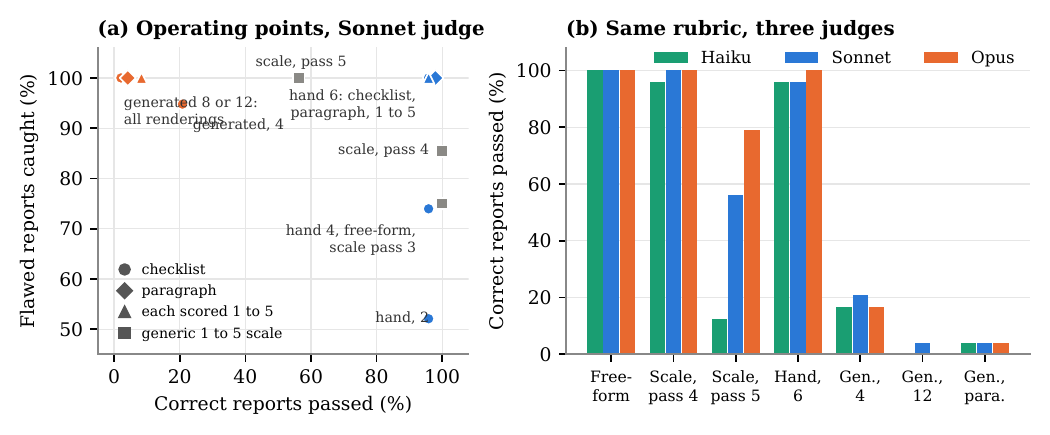}
\caption{Strictness on Recon-72 (24 correct, 48 flawed reports). (a) Operating point of each rubric under a Sonnet judge. Hand-written and generated checks form two clusters that rendering does not move. (b) Correct reports passed under the same rubrics by three judges, which agree within a few points except with a generic scale and a pass threshold of 5. \emph{Takeaway:} what the checks ask for moves correct passed from about 4\% to about 96\%, while rendering and judge model move it by a few points, as claimed: strictness is set by rubric content.}
\label{fig:strict}
\end{figure}

\paragraph{Content decides; rendering and judge do not.} Figure~\ref{fig:strict}a shows two clusters. The six hand checks pass 95.8\% of correct reports as a checklist, 97.9\% as a paragraph and 95.8\% when each is scored 1 to 5, and catch every flaw in all three renderings. The twelve ranked generated checks pass 4.2\%, 4.2\% and 8.3\%, also catching every flaw. Across judges (Figure~\ref{fig:strict}b) the hand checklist passes 95.8 to 100\% of correct reports and the generated checklist 0 to 4.2\%. Nearly every generated check that fails a correct report asks for content outside the output contract, such as ``does the report recommend disabling anonymous access?''.

\paragraph{Count costs only what the added checks cost.} Nested hand checks raise flawed caught from 52.1\% (2 checks) to 74.0\% (4) to 100\% (6) at a constant 95.8\% correct passed. Nested prefixes of the ranked generated rubric pass 20.8\% (4 checks), 2.1\% (8) and 4.2\% (12). Under all-must-pass, correct passed is the probability that every check answers Yes; if false fails were independent it would equal the product of the per-check pass rates on correct outputs. Across 12 rubric versions on Recon-108 the per-environment product predicts the observed rate within 2 points. Ten checks that each wrongly fail 5\% of correct outputs pass only $0.95^{10}\approx 60\%$ of them, and no single check looks wrong on inspection.

\paragraph{A scale threshold is a per-judge dial.} The one condition on which judges diverge is the generic 1 to 5 scale with pass only at 5: Haiku passes 12.5\% of correct reports, Sonnet 56.2\% and Opus 79.2\%, because the threshold measures how readily each model awards a top score. On reports with a missed finding, Sonnet's pass rate is 95.8\% at threshold 3, 58.3\% at 4 and 0\% at 5.

\paragraph{Leniency belongs to the empty rubric.} With a free-form verdict every judge passes every correct report, and Sonnet also passes 100\% of reports with a missed finding (Haiku 83.3\%, Opus 75.0\%). Nothing asked the judge to check completeness.

\paragraph{The aggregator outweighs the wording.} With per-check answers fixed, changing the generated draft's aggregator on Recon-108 from all-must-pass to ``at most one No'' raises correct passed from 5.6\% to 69.4\%. Three rounds of rewording by self-review left it at 5.6\%.

\subsection{Human-labeled datasets}
\label{sec:strict-human}

\begin{figure}[t]
\centering
\includegraphics[width=\linewidth]{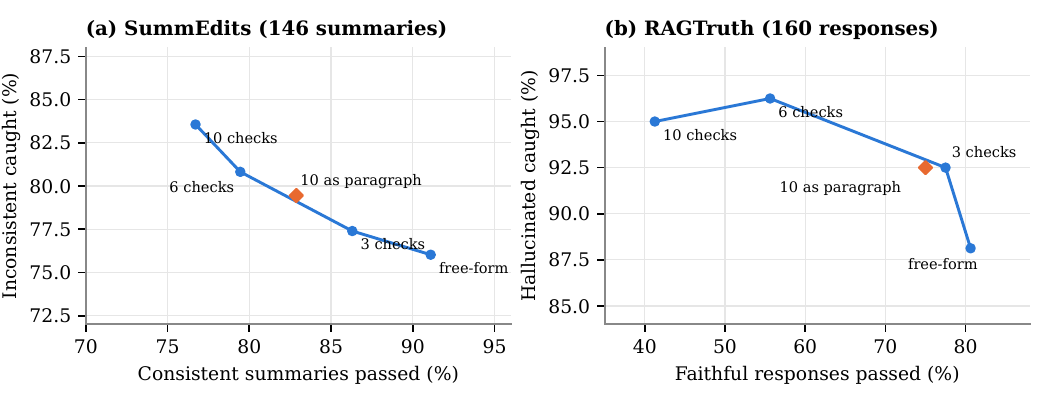}
\caption{Generic checklists of increasing length under a Sonnet judge, mean of two runs. (a) On SummEdits each added check trades a few consistent summaries passed for inconsistent ones caught. (b) On RAGTruth, going from 3 to 10 checks costs 36 points of faithful responses passed for at most 4 points of catches; the same 10 checks read as a paragraph pass 75\%. \emph{Takeaway:} the content effect replicates on human labels, small where generic checks rarely demand anything a correct output lacks (SummEdits) and large where they forbid the light synthesis annotators accepted (RAGTruth); RAGTruth is also the one place where rendering mattered, through the aggregator.}
\label{fig:human}
\end{figure}

The same count experiment with generic checklists reproduces the effect on human labels (Figure~\ref{fig:human}). On SummEdits, free-form passed 91.1\% of consistent summaries and caught 76.0\% of inconsistent ones; 3, 6 and 10 checks moved this to 86.3/77.4, 79.5/80.8 and 76.7/83.6. The direction matches Recon with a smaller magnitude, since generic checks rarely demand content a consistent summary lacks.

On RAGTruth the effect appears at full size. Free-form passed 80.6\% of faithful responses and caught 88.1\% of hallucinated ones; 3, 6 and 10 checks gave 77.5/92.5, 55.6/96.2 and 41.2/95.0. Ten checks fail 29 and 34 faithful responses (two runs) that free-form passes and pass none that free-form fails ($p<10^{-8}$ in each run), for 7 points of additional catches. The strictness is heavier in question answering, where 33.8\% of faithful answers pass ten checks against 48.8\% of faithful summaries: the check against ``conclusions, inferences, causal links, or generalizations that go beyond what the source directly supports'' fails 21 to 22 of the 40 faithful answers, the check against merging passages 15 to 17, and the check against unsupported answers 11 to 14. Answering a question from several passages requires exactly the light synthesis these checks forbid, and the annotators accepted it. Here rendering matters: the same ten checks read as one paragraph pass 75.0\% of faithful responses. Since the paragraph carries the same propositions, the difference lies in the aggregator; read holistically, the judge does not let one borderline inference fail an otherwise faithful answer.

\subsection{Why accuracy hides strictness}
\label{sec:prev}

\begin{figure}[t]
\centering
\includegraphics[width=0.52\linewidth]{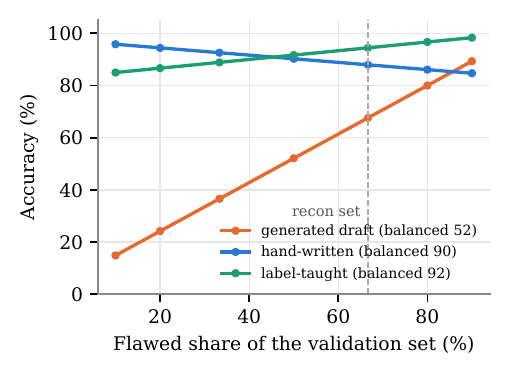}
\caption{Accuracy of three Recon rubrics as a function of the flawed share of the validation set, computed from their two rates. The generated draft has balanced accuracy 52, the level of chance, yet outscores the hand-written rubric whenever more than 80\% of the set is flawed. \emph{Takeaway:} accuracy on a defect-heavy validation set can rank a rubric that fails almost everything above one that works, which is why we report the two rates throughout.}
\label{fig:prev}
\end{figure}

Judge validation sets are commonly built by planting defects, so most of their items are flawed, and accuracy is a prevalence-weighted mixture of the two rates. The generated draft passes 5.6\% of correct reports and catches 98.6\% of flawed ones; its accuracy is 89\% on a set that is 90\% flawed and 52\% on a balanced set (Figure~\ref{fig:prev}). Reporting both rates, or balanced accuracy, removes the dependence.

\section{Exact Shapley attribution for all-must-pass rubrics}
\label{sec:shapley}

Which checks are responsible for a rubric's catches and for its false fails? Per-check fail rates do not answer this, because a flawed output failed by three checks should not count three times. We treat each graded output as a cooperative game among the checks.

Fix an output $i$ and the rubric's checks $K$, and let $F_i \subseteq K$ be the checks the judge answered No. Define $v_i(S) = \mathbb{1}[S \cap F_i \neq \emptyset]$: a coalition of checks fails the output if any of its members does, and the rubric's verdict is fail iff $v_i(K)=1$.

\begin{proposition}
For the game $v_i$, the Shapley value of check $j$ is $\phi_j(v_i) = 1/|F_i|$ if $j \in F_i$ and $0$ otherwise.
\end{proposition}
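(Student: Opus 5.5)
The plan is to derive the formula from the Shapley axioms rather than from the permutation sum, and to treat the degenerate case separately.

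\emph{Case $F_i=\emptyset$.} Here $v_i(S)=0$ for every coalition $S$. All marginal contributions vanish, so every $\phi_j(v_i)=0$, as claimed. (In this case the expression $1/|F_i|$ is never used.)

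\emph{Case $F_i\neq\emptyset$.} First I will show that every check $j\notin F_i$ is a null player. For any $S\subseteq K\setminus\{j\}$ we have $(S\cup\{j\})\cap F_i = S\cap F_i$. Hence $v_i(S\cup\{j\})=v_i(S)$, and the null-player axiom gives $\phi_j(v_i)=0$.

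Next I will show that any two checks $j,k\in F_i$ are symmetric. For $S$ containing neither $j$ nor $k$, both $S\cup\{j\}$ and $S\cup\{k\}$ meet $F_i$, so both have value $1$. The symmetry axiom then gives $\phi_j(v_i)=\phi_k(v_i)$.

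Efficiency then finishes the argument. The values sum to $v_i(K)-v_i(\emptyset)=1-0=1$. The null players contribute nothing, so the $|F_i|$ equal values inside $F_i$ sum to $1$, and each equals $1/|F_i|$.

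As a cross-check, I would also confirm the result with the permutation definition. Take a uniformly random ordering of $K$. Check $j$ has marginal contribution $1$ exactly when it is the first element of $F_i$ to appear, and $0$ otherwise. By symmetry of the random ordering, each member of $F_i$ is first with probability $1/|F_i|$. This gives the same formula without invoking the axioms, and I would include it as a one-line remark.

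There is no real obstacle in this argument. The only point needing care is the empty-$F_i$ case, where $1/|F_i|$ is undefined; the statement's ``$0$ otherwise'' branch covers it, and I will say so explicitly.
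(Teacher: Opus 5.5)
Your proof is correct, and it reaches $1/|F_i|$ by a different step than the paper. Both arguments handle $j\notin F_i$ with the null-player property. For $j\in F_i$, however, the paper computes the value directly from the random-order definition: check $j$ is pivotal exactly when it precedes every other member of $F_i$, which has probability $1/|F_i|$. The paper mentions efficiency only as a consistency check. You instead get $1/|F_i|$ from the symmetry axiom plus efficiency, and your ``cross-check'' is precisely the paper's proof.

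Your route needs no counting. It uses only that the Shapley value is symmetric, efficient and zero on null players, not the full uniqueness theorem. It also shows that efficiency is the property that fixes the normalization. That is exactly why the Banzhaf value gives $2^{1-|F_i|}$ instead, as the paper remarks: it is symmetric and null-player-respecting but not efficient.

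The paper's route is self-contained from the definition. It also exhibits the pivotal event ``$j$ comes first among $F_i$'', which links directly to the unique-catch notion: when $|F_i|=1$, $j$ is pivotal in every ordering. Both proofs ultimately rest on the same exchangeability of the members of $F_i$, applied at the level of orderings in one case and of the game in the other.

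Your explicit treatment of $F_i=\emptyset$ is a small improvement. The paper's line $\sum_j\phi_j=1=v_i(K)$ tacitly assumes $F_i\neq\emptyset$, although its null-player clause already covers the empty case.
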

\begin{proof}
If $j \notin F_i$, $j$ never changes $v_i$ and is a null player, so $\phi_j=0$. If $j \in F_i$, in a uniformly random ordering of $K$ the marginal contribution of $j$ is 1 exactly when $j$ precedes every other member of $F_i$, which happens with probability $1/|F_i|$, and 0 otherwise. Efficiency holds: $\sum_j \phi_j = 1 = v_i(K)$.
\end{proof}

Summing over labeled outputs gives each check a \emph{catch credit} $C_j = \sum_{i\,\text{flawed}} \phi_j(v_i)$ and a \emph{false-fail blame} $B_j = \sum_{i\,\text{correct}} \phi_j(v_i)$. A check holds a \emph{unique catch} on flawed output $i$ when $F_i = \{j\}$: removing it would let that output pass. By efficiency, credits sum to the number of flawed outputs caught and blames to the number of correct outputs failed, so the two headline rates decompose exactly over checks. Computing all three quantities takes one pass over the answer matrix, $O(nk)$ for $n$ outputs and $k$ checks, against $2^k$ coalitions for a generic Shapley computation. (The Banzhaf value would assign $2^{1-|F_i|}$ and lose efficiency.) Attribution uses the per-check answers that checklist judges already produce, and assumes an answer does not depend on which other checks are present; in the nested rubrics of Section~\ref{sec:strict}, the same check graded inside a shorter or longer checklist changed 2.1 to 3.5\% of answers, within the 0 to 3.2\% range between plain reruns of one rubric. Appendix~\ref{app:shapley-example} works through the attribution of one environment.

\paragraph{What is perturbed.} Readers who know Shapley values from feature attribution may expect the output to be perturbed. Here the output and the judge's per-check answers are fixed, and the players are the checks: a coalition $S$ is a smaller rubric, and $v_i(S)$ is the verdict that rubric would give. Because the judge answers each check separately, every sub-rubric's verdict can be read off the stored answers without re-grading.

\paragraph{A worked example.} Consider a rubric of four checks graded on five outputs (Table~\ref{tab:toy}). Output R2 is failed by A and B: if the four checks are added to the rubric in a random order, the verdict flips when the first of A or B arrives, each is first in half of the orders, so each receives $\tfrac12$ and C and D receive 0.

\begin{table}[h]
\centering
\small
\begin{tabular}{llll}
\toprule
Output & Label & Checks answering No ($F_i$) & Share of the outcome \\
\midrule
R1 & flawed & A & A: 1 (unique catch) \\
R2 & flawed & A, B & A, B: $\tfrac12$ each \\
R3 & flawed & B, C, D & B, C, D: $\tfrac13$ each \\
R4 & correct & D & D: 1 of blame \\
R5 & correct & none & none \\
\bottomrule
\end{tabular}
\caption{A four-check rubric on five outputs. Totals: A has credit 1.5 and one unique catch; B 0.83 and C 0.33, both without unique catches; D has credit 0.33 and blame 1. Credits sum to the 3 flawed outputs caught and blame to the 1 correct output failed. \emph{Takeaway:} only deleting A would lose a catch and only deleting D would let a correct output pass, so unique catches and blame, not raw fail counts, identify which checks matter; these are the rules the Shapley gate applies in Section~\ref{sec:gates}.}
\label{tab:toy}
\end{table}

The totals answer the questions a reviser's edits raise, which can be checked by deleting each check and re-reading the verdicts. Deleting A lets R1 pass: A holds the only catch on R1 and must stay. Deleting B or C changes no verdict, because every output they fail is also failed by another check; credit that is shared does not protect a check. Deleting D loses no catch, since R3 is still failed by B and C, and lets the correct output R4 pass: D's blame is the cost it imposes, and a new check with blame should not be admitted. These are exactly the rules of the Shapley gate in Section~\ref{sec:gates}.

Two caveats follow from the example. First, a check's quantities depend on the other checks present: B and C are each safe to delete alone, but deleting both leaves R3 caught only by D, so attribution has to be recomputed after every accepted change. Our gates recompute between revision rounds, not between deletions within a round. Second, zero credit means only that no labeled output exercised the check. A check guarding against a flaw absent from the labeled data scores zero however necessary it is; Section~\ref{sec:gates} shows this failure for a coverage check whose flaw had a single labeled example, which is why the strongest gate pairs attribution with a provenance freeze.

Attribution immediately characterizes the rubrics of Section~\ref{sec:setup}. In the hand-written rubric, the 72 checks carry 60.0 reports of catch credit and 1.0 report of blame on Recon-108. In the label-taught rubric, the 23 checks that the label round introduced carry 25.5 reports of credit, zero blame and 13 of the rubric's 24 unique catches, while the 72 checks inherited from the generated draft carry all 6.0 reports of blame. In the generated draft itself, 92 checks carry 34.0 reports of blame among 36 correct reports.

\section{Self-review drifts toward the reviser's standard}
\label{sec:drift}

\subsection{Protocol}

Starting from the hand-written rubric (H) and the label-taught rubric (L) on Recon, a reviser revised the checks of all 12 environments for four rounds under a generic instruction: review for ambiguity, gaps, checks a different judge might misread, and flaws no check would catch. The arms were free revision with Opus (two seeds), Sonnet and Haiku; Opus with the check count held fixed; and Opus restricted to deleting or merging checks. After rounds 2 and 4 the Sonnet judge graded all 108 reports. On SummEdits and RAGTruth, Opus revised the label-taught rubric for two rounds under the free, fixed-count and delete-only constraints, and the generated rubric for two free rounds.

\subsection{Results on Recon}

\begin{figure}[t]
\centering
\includegraphics[width=\linewidth]{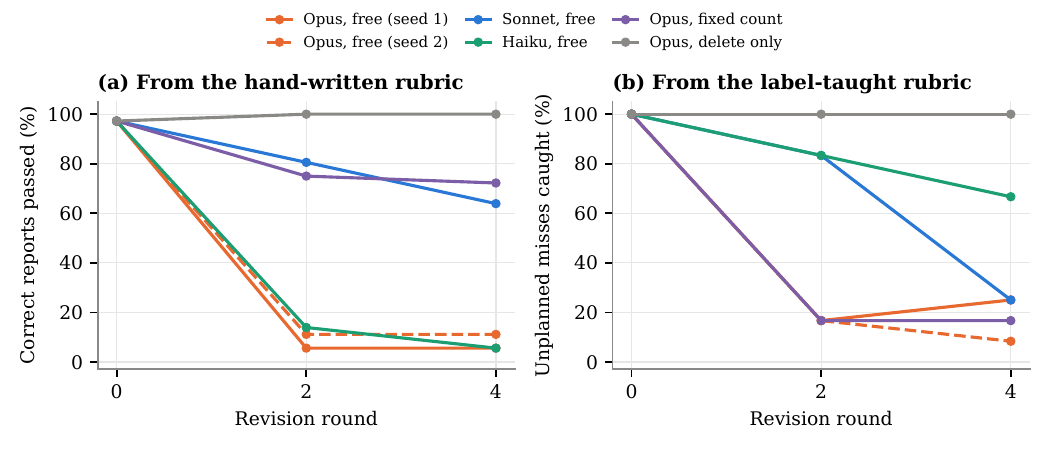}
\caption{Self-review over four rounds on Recon-108. (a) From the hand-written rubric, free revision adds requirements and correct reports stop passing. (b) From the label-taught rubric, free and fixed-count revision remove or narrow the open coverage check and unplanned misses stop being caught. Delete-only revision is flat in both. \emph{Takeaway:} drift runs in opposite directions from the two starting rubrics, too strict from one and too lenient on omissions from the other, each time toward the reviser's own idea of a good report; only the arm that cannot add requirements avoids it, consistent with the claim that self-review moves the standard rather than refining it.}
\label{fig:drift}
\end{figure}

Figure~\ref{fig:drift} shows drift in two directions. From H, free revision by Opus took correct passed from 97.2\% to 5.6\% and 11.1\% (two seeds), Haiku to 5.6\%, and Sonnet more slowly to 63.9\% (12 of 36 correct reports newly failed, none gained, $p=0.0005$); fixed-count revision reached 72.2\%. The additions were dominated by remediation requirements: Opus's round-4 rubrics contain 13 of them across the 12 environments (Haiku's 12, Sonnet's 11), against 1 at the start. The same revisions raised unplanned caught from 0\% to as much as 91.7\%, but a rubric that fails most correct reports is not usable.

From L, Opus deleted the open coverage check in nearly every environment, describing it in one change note as ``the vague completeness check''; unplanned caught fell from 100\% to 25.0\% and 8.3\%. Sonnet kept the check in place but rewrote it as a closed list of the items it expected (``apart from the payroll exposure, the logging gap\ldots''), which no longer catches an omission it did not list: 100\% to 25.0\%. Haiku kept the check but, in several environments, wrote the answer into it (``the inventory states no further weakness, so answer Yes unless the report omits the dangling record''), and unplanned caught fell to 66.7\%; fixed count fell to 16.7\%. Three revisers, three ways of closing the same check (Appendix~\ref{app:recon}). In every L arm the lost catches were all unplanned misses; the 60 planted-flaw catches were untouched. Only delete-only revision left every metric at its starting value on both rubrics (H: 97.2\% to 100\% correct passed; L: unchanged) while shrinking them from 6.0 to 4.0 and from 7.9 to 5.8 checks per environment.

Check count is not the mechanism. Free revision grew the H rubric by 1.7 to 3.1 checks per environment and shrank or preserved the L rubric; both got worse. What changed was content, and in each case it moved toward what the reviser considered a good report.

\paragraph{Model-written tests do not detect the problem; labels do.} Before these loops we tried to repair the generated draft (5.6\% correct passed) with a test-writing loop: the model wrote test reports with intended verdicts and per-check answers, a judge graded them, and the model revised on disagreement. Verdicts agreed on 71 or 72 of 72 test reports in every round, and correct passed remained at 5.6\% after three rounds, as it did under plain self-review. The model wrote its intended-pass reports to its own standard, remediation advice included, so the loop had no way to see the standard was wrong. One revision from 36 labeled reports raised correct passed to 80.6 to 83.3\% while keeping every catch; self-review then removes what the labels taught.

\subsection{Results on human-labeled data}
\label{sec:drift-human}

On SummEdits the label round relaxed clauses the annotators did not enforce and added a paraphrase check (11 checks). The first self-review round reversed this; its change note says it ``tightened leniency clauses a literal judge could exploit''. On the evaluation split, consistent summaries passed fell from 78.4\% to 54.9\% after one round and 58.8\% after two (10 lost, 0 gained, $p=0.002$), and balanced accuracy from 80.0 to 72.3. Fixed count drifted as far ($p=0.012$). Delete-only was mildest but not neutral (78.4\% to 70.6\%, 4 lost, $p=0.125$), as its merges produced longer combined checks.

On RAGTruth the label round rewrote all 10 checks to target material hallucinations and added carve-outs for peripheral imprecision, paraphrase and light inference, lifting faithful responses passed on the evaluation split from 42.9\% to 71.4\% (18 gained, 2 lost, $p<0.001$) with catches nearly unchanged (91.1\% to 89.3\%). The first self-review round split the taught contradiction check into ``a strict accuracy check for every copied detail and a contradiction check without the peripheral-error exemption, since misquoted numbers or dates in minor details are still unfaithful'', a direct reversal of what the labels had shown. Faithful passed fell to 46.4\% (14 lost, 0 gained, $p=0.0001$) and stood at 48.2\% after round two; fixed count fell to 51.8\% ($p=0.003$); delete-only was flat (69.6\%, 4 lost and 3 gained). Self-review of the generated rubric grew it from 10 to 15 checks and moved it further toward failing everything (98.2\% caught, 35.7\% passed).

\subsection{What self-review removes}
\label{sec:drift-attr}

\begin{figure}[t]
\centering
\includegraphics[width=\linewidth]{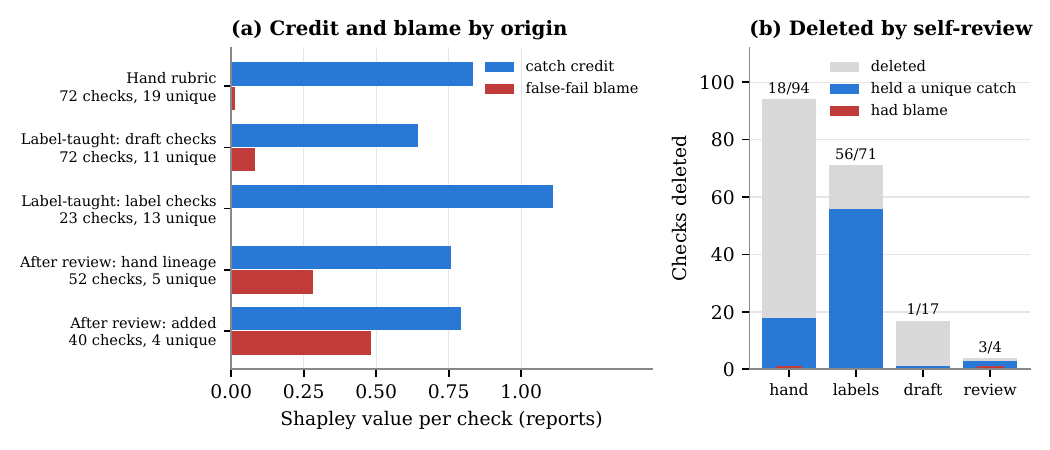}
\caption{(a) Shapley catch credit and false-fail blame per check on Recon-108, grouped by the origin of the check. Label-derived checks carry the most credit per check and no blame; checks added by Opus review of the hand rubric carry the most blame. (b) Of the 186 checks deleted across the 12 Recon revision chains, the number that held a unique catch and the number that carried any false-fail blame, by origin. \emph{Takeaway:} 78 of the 186 deleted checks held a unique catch and only 2 carried any blame, so self-review removes checks that do work no other check does and keeps the ones that fail correct outputs, the opposite of what an improving revision would do.}
\label{fig:attr}
\end{figure}

Attribution makes the mechanism visible. We trace each check's origin through revisions by text similarity to its parent. After four rounds of Opus review of the hand rubric, the 40 added checks carry 19.3 of the rubric's 34 reports of blame and rewritten hand checks another 14.7 (Figure~\ref{fig:attr}a). Figure~\ref{fig:attr}b shows what the 12 Recon chains deleted: of 186 deleted checks, 179 had caught at least one flawed report and 78 held a unique catch, among them 56 of the 71 deleted label-derived checks. Only 2 of the 186 carried any false-fail blame. Self-review removes the checks that do work no other check does and keeps, or adds, the checks that fail correct outputs.

\section{Change control for rubric revision}
\label{sec:gates}

If rubric content carries the standard and unsupervised revision moves it, a revision should be accepted the way a code change is: check by check, against evidence. Each revision proposes a new rubric; a gate decides, per environment and per check, which changes to accept. Proposed checks are matched to current ones by text similarity and treated as rewrites, or otherwise as additions; unmatched current checks are deletions. We compare four gates.
\begin{itemize}
\item \textbf{Gate R (regression).} Accept the whole revision only if, on the calibration reports, correct passed and flawed caught each drop by fewer than 2 reports and no unplanned miss is lost.
\item \textbf{Gate P (provenance, label-free).} Checks that came from a human author or from labels are frozen against deletion and rewriting; new checks that fail more than half of all reports in their environment are blocked.
\item \textbf{Gate S (Shapley).} Block deletion of any check holding a unique catch; block any addition with false-fail blame above zero; block a rewrite that adds a false fail or loses a unique catch. Labels come from the same environment with the scored report held out (leave-one-out), so no report's own label decides its verdict.
\item \textbf{Gate S+P.} Gate S together with the provenance freeze of Gate P.
\end{itemize}
Appendix~\ref{app:gate-example} traces Gates S and S+P through one environment.

\subsection{Results on Recon}

\begin{figure}[t]
\centering
\includegraphics[width=\linewidth]{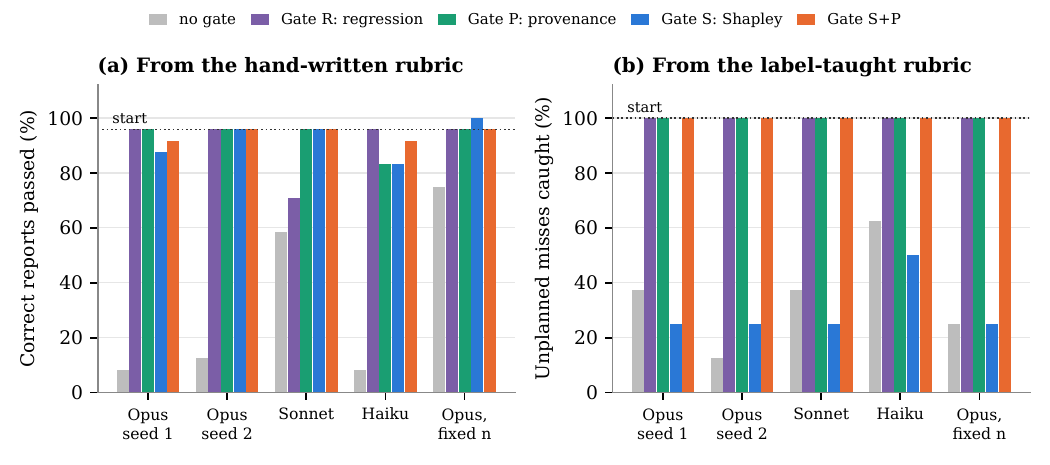}
\caption{Quality of the gated rubric after four rounds of revision, scored on the 8 evaluation environments. (a) From the hand-written rubric the failure mode is strictness; (b) from the label-taught rubric it is lost coverage. Only Gate S+P stays near the starting level (dotted) in both panels. \emph{Takeaway:} each single signal protects one failure direction (Gate R only what its calibration labels see, Gate P coverage, Gate S strictness), and only their combination protects both, with one exception (Sonnet's label-taught chain) discussed in the text; this supports gating revisions check by check rather than trusting either labels or attribution alone.}
\label{fig:gates}
\end{figure}

Figure~\ref{fig:gates} summarizes the ten free and fixed-count chains; the delete-only chains did not need a gate.

Gate R rejected both rounds in 9 of 10 chains and so restored the start. It failed on Sonnet's H chain: round 2 changed no verdict in the 4 calibration environments and was accepted, although on the evaluation environments the accepted rubric passes 70.8\% of correct reports against 95.8\% at the start. A regression gate sees only what its calibration set covers.

Gate P protects coverage on every L chain (100\% unplanned caught) and blocks most remediation additions on the H chains, but on Haiku's H chain new checks that fail a minority of reports pass its label-free screen (83.3\% correct passed).

Gate S handles strictness, keeping correct passed at 83.3 to 100\% on the H chains, but cannot protect the open coverage check: each environment holds one unplanned miss, and under leave-one-out that report's label is the only witness, so the check never shows a unique catch and its deletion is accepted (unplanned caught 25 to 50\%).

Gate S+P keeps unplanned caught at 100\% on every L chain and correct passed at 91.7 to 95.8\% on every H chain, with flawed caught at its starting value. The single exception is Sonnet's L chain (58.3\% correct passed against 75.0\%): the rewritten coverage check matched no frozen check, entered as an addition, and failed exactly one correct report per environment, which leave-one-out cannot detect.

Replaying gates from stored per-check answers agrees with grading the gated rubric afresh: re-grading the S+P rubrics of Opus's H and L chains and the Gate P rubrics of both Opus L chains from scratch gave results within 4.2 points of the replay on every metric.

\subsection{Robustness to the judge}

We re-graded six rubric versions (the H and L starts and rounds 2 and 4 of one Opus chain each) with Haiku and Opus as judges. Per-check answers agreed on 97.0 to 98.1\% of 4{,}806 check decisions and verdicts on 98.6 to 99.5\% of reports. Table~\ref{tab:judges} replays the gates under each judge; Gate R rejected both rounds under every judge, and Gate S+P keeps every metric within 6 points of that judge's own starting value.

\begin{table}[t]
\centering
\small
\begin{tabular}{llccc}
\toprule
Chain & Rubric or gate & Haiku judge & Sonnet judge & Opus judge \\
\midrule
H, Opus seed 1 & start & 100.0 / 83.3 / 0.0 & 97.2 / 83.3 / 0.0 & 97.2 / 83.3 / 0.0 \\
 & ungated (round 4) & 5.6 / 98.6 / 91.7 & 5.6 / 98.6 / 91.7 & 5.6 / 98.6 / 91.7 \\
 & Gate S+P, leave-one-out & 97.2 / 83.3 / 0.0 & 94.4 / 83.3 / 0.0 & 91.7 / 83.3 / 0.0 \\
\midrule
L, Opus seed 2 & start & 80.6 / 98.6 / 91.7 & 83.3 / 100 / 100 & 83.3 / 100 / 100 \\
 & ungated (round 4) & 83.3 / 84.7 / 16.7 & 80.6 / 84.7 / 8.3 & 80.6 / 86.1 / 16.7 \\
 & Gate S+P, leave-one-out & 83.3 / 98.6 / 91.7 & 80.6 / 100 / 100 & 80.6 / 100 / 100 \\
\bottomrule
\end{tabular}
\caption{Gate replay under three judge models on all 12 Recon environments. Each cell: correct passed / flawed caught / unplanned caught (\%). \emph{Takeaway:} under every judge the ungated rubric collapses in the same way, and Gate S+P keeps each metric within 6 points of that judge's own starting value, so the gate's decisions do not depend on which model applies the rubric.}
\label{tab:judges}
\end{table}

\subsection{Results on human-labeled data}
\label{sec:gates-human}

On SummEdits and RAGTruth the gates decide with the calibration labels and are scored on the evaluation split; the freeze covers the label-round checks that differ from the generated draft (7 of 11 on SummEdits, all 10 on RAGTruth). Table~\ref{tab:human} reports the results.

\begin{table}[t]
\centering
\small
\begin{tabular}{lcccccc}
\toprule
Chain & start & ungated & Gate R & Gate P & Gate S & Gate S+P \\
\midrule
\multicolumn{7}{l}{\emph{SummEdits} (51 consistent, 49 inconsistent evaluation items)} \\
Generated, free & 72.5 / 79.6 & 58.8 / 87.8 & 64.7 / 83.7 & 58.8 / 87.8 & 68.6 / 81.6 & 68.6 / 81.6 \\
Label-taught, free & 78.4 / 81.6 & 58.8 / 85.7 & 58.8 / 85.7 & 62.7 / 85.7 & 64.7 / 85.7 & 68.6 / 85.7 \\
Label-taught, fixed count & 78.4 / 81.6 & 60.8 / 85.7 & 78.4 / 81.6 & 70.6 / 83.7 & 68.6 / 83.7 & 70.6 / 83.7 \\
Label-taught, delete only & 78.4 / 81.6 & 70.6 / 85.7 & 70.6 / 85.7 & 74.5 / 81.6 & 72.5 / 83.7 & 74.5 / 81.6 \\
\midrule
\multicolumn{7}{l}{\emph{RAGTruth} (56 faithful, 56 hallucinated evaluation items)} \\
Generated, free & 42.9 / 91.1 & 35.7 / 98.2 & 42.9 / 91.1 & 35.7 / 98.2 & 35.7 / 96.4 & 35.7 / 96.4 \\
Label-taught, free & 71.4 / 89.3 & 48.2 / 94.6 & 71.4 / 89.3 & 53.6 / 94.6 & 58.9 / 83.9 & 71.4 / 91.1 \\
Label-taught, fixed count & 71.4 / 89.3 & 51.8 / 92.9 & 71.4 / 89.3 & 53.6 / 89.3 & 85.7 / 92.9 & 71.4 / 89.3 \\
Label-taught, delete only & 71.4 / 89.3 & 69.6 / 87.5 & 69.6 / 87.5 & 71.4 / 89.3 & 78.6 / 85.7 & 71.4 / 89.3 \\
\bottomrule
\end{tabular}
\caption{Gates on the human-labeled datasets after two rounds of Opus revision, decided with calibration labels and scored on the evaluation split. Each cell: correct outputs passed / flawed outputs caught (\%). \emph{Takeaway:} on every label-taught chain Gate S+P passes more correct outputs than the ungated rubric, at a cost of at most 4.1 points of catches; it returns RAGTruth to its starting value and recovers part of the SummEdits loss, consistent with the gates protecting what their calibration labels and frozen checks can see.}
\label{tab:human}
\end{table}

The two datasets behave differently, and the difference is instructive. On RAGTruth the calibration split exposes the drift (faithful passed on calibration fell from 79.2\% to 62.5\% and 45.8\%), so Gate R rejects every damaging round, and Gate S+P, which here freezes all ten taught checks, holds the start (71.4/91.1 on the free chain; balanced accuracy 81.2 against 80.4 at the start). Gate S alone permits deletion of taught checks whose catches are shared and falls to 58.9\% on the free chain, while on the fixed-count chain it accepts only rewrites that add no false fail on calibration and finishes above the start (85.7/92.9). On SummEdits, Gate R accepted the second free round because calibration shows a drop of only one consistent summary: the label round was taught on these very items, and the drift concentrates elsewhere. Gate S+P recovers about half of the lost passes (68.6\%; balanced accuracy 77.2 against 72.3 ungated); the taught standard there lives in clauses inside checks that partly match the generated draft verbatim, so a check-level freeze protects it only in part.

\section{Discussion}
\label{sec:discussion}

\paragraph{What the results imply.} Three implications go beyond the individual experiments. First, when a checklist grader looks too lenient or too strict, the rubric is the first place to look and the judge model the last. Replacing the judge moved correct passed by a few points; the content of the checks moved it by up to 98. A check that fails nearly every output, correct or not, is almost always demanding something the task does not require, and it can be found from unlabeled outputs alone. Second, a rubric's standard cannot be repaired from inside the model that wrote it. Self-review imported the reviser's own standard with all three revisers and on all three datasets, and a loop in which the model wrote its own test cases agreed with itself on 71 or 72 of 72 tests while the rubric still failed 94\% of correct reports. In every setting, the only intervention that moved the standard toward the intended one was a small set of labeled outputs, and self-review then undid it. Third, the same mechanism matters wherever rubrics serve as rewards. A rubric revised by the model family it grades can absorb that family's preferences with no optimization pressure at all, so a policy trained on it is rewarded for agreeing with its own family rather than with the task. This is a quieter route to reward hacking than the exploitation studied in prior work \citep{mahmoud2026reward,xie2026impossiblerubrics}.

\paragraph{Recommendations.} For teams that grade with checklists:
\begin{itemize}
\item Report correct passed and flawed caught, or balanced accuracy, never accuracy alone; a validation set built from planted defects rewards rubrics that fail everything.
\item Treat every added check as a cost. Under all-must-pass, each check multiplies the pass rate of correct outputs by its own, so audit each check's fail rate on unlabeled outputs and keep only checks that do work no other check does.
\item Choose the aggregation rule deliberately and report it; on our data it moved correct passed by more than any rewording.
\item Record the provenance of every check and freeze human- and label-derived checks against automated revision.
\item Permit unattended revision to delete or merge, never to add or tighten. Even this is not free: on SummEdits, merges produced longer checks and cost 8 points of correct passed.
\item Gate every revision with per-check Shapley credit and blame on labeled data, recompute attribution after each accepted change, and keep at least two labeled examples of each rare flaw per environment.
\end{itemize}

\paragraph{What each gate protects, and where it fails.} No single gate was sufficient, and the failures are informative. The regression gate (R) catches any damage its calibration labels can see and nothing else: it accepted a revision that changed no calibration verdict but cut correct passed elsewhere from 95.8\% to 70.8\%, and on SummEdits it missed drift concentrated outside the items the rubric had been taught on. The provenance gate (P) protects label-derived checks without any labels at revision time but lets harmful new checks through when they fail only a minority of outputs. The Shapley gate (S) blocks harmful additions and rewrites precisely, but cannot protect a check whose flaw is rare in the labeled data, because a zero score there means no evidence rather than no value. The combination (S+P) covers both directions and was the most reliable, yet it inherits two limits: a standard expressed as a clause inside a check is protected only as far as the check-level freeze reaches, and a rewritten check that the similarity matcher fails to recognize enters as an addition. In practice the gates are complements: S+P for routine revisions, R as a final check on a calibration set refreshed with new labels.

\paragraph{Limitations.} The Recon environments and labels are synthetic and ours; the SummEdits and RAGTruth replications cover consistency and faithfulness grading with human labels but not agentic tasks. All judges, rubric authors and revisers are Claude models (the RAGTruth responses come from other LLMs), so the insensitivity to the judge is shown within one model family, and the direction of drift toward remediation advice may be particular to these revisers' standards even though drift itself appeared with all three. Samples are small: 24 to 36 correct reports per comparison and 8 to 12 unplanned misses, so differences of a few points are within noise, while the main effects (97\% to 6\%, 100\% to 8\%) lie far outside it. Free revision on Recon ran with two Opus seeds and one seed for the other arms, and each rubric version was graded once per round. Gates were evaluated by replay from stored per-check answers, which assumes a check's answers do not depend on its neighbors; spot re-grades agreed within 4.2 points. Check matching across revisions relies on a text-similarity threshold, which misclassified one rewrite as an addition in the failure case of Section~\ref{sec:gates}.

\section{Conclusion}

A checklist rubric carries the standard its judge enforces: content sets strictness, the judge model and the rendering move it by a few points, and the common practice of improving a rubric by model self-review moves the standard toward the reviser's own, deleting exactly the checks that labels taught. Because all-must-pass rubrics admit exact Shapley attribution in closed form, each check's contribution can be measured at negligible cost, and revisions can be gated like code changes. A gate that combines Shapley rules with a provenance freeze held nearly every drifting chain at its starting quality, on synthetic and human-labeled data and under three judge models.

\appendix
\newenvironment{promptbox}{\par\medskip\begin{quote}\small\ttfamily\raggedright\sloppy}{\end{quote}\medskip}
\newenvironment{checks}{\begin{enumerate}[leftmargin=2em,itemsep=1pt,topsep=2pt]\small}{\end{enumerate}}
\newenvironment{reports}{\begin{itemize}[leftmargin=1.2em,itemsep=2pt,topsep=2pt]\small}{\end{itemize}}

\section{Prompts}\label{app:prompts}

All prompts below are reproduced verbatim from the code that generated them (\texttt{recon/pipeline.py}, \texttt{recon/form.py}, \texttt{recon/loops.py}, \texttt{summedits/summedits.py}, \texttt{ragtruth/ragtruth.py}). Each judging or revision job was run as an isolated model session that received one instruction: ``Read the file \textless{}input\textgreater{} and follow its instructions exactly. Write only the requested JSON to \textless{}output\textgreater{}. Do not open, list or read any other file, and do not run any commands.'' Judges therefore saw the input file and nothing else.

\subsection{Judge prompt (Recon)}

The head of every checklist batch, followed by one rendered item (a correct report for the storage environment under the hand-written rubric). A batch holds one report per environment, nine batches cover Recon-108.

\begin{promptbox}
You are reviewing findings reports written by an automated security reconnaissance agent. For each item you get the environment inventory (the ground truth about the tenant), the agent's report, and a numbered rubric. Judge the report only against the inventory, and judge each item on its own. For each numbered check, answer Yes if the report satisfies what the check asks, otherwise No.\\{}
~\\{}
Reply with only a JSON array, one object per item in the order given, with one answer per check in order, like this:\\{}
[\{"id": "XXXXX", "answers": ["Yes", "No", "Yes"]\}]\\{}
~\\{}
Items:\\{}
~\\{}
Item 0IBYY\\{}
Environment: Cloud storage and data exposure\\{}
Inventory:\\{}
  - Subscription: contoso-prod. Storage accounts: datalake01, backups02, webassets03.\\{}
  - datalake01 container 'hr-exports' has anonymous (public) read enabled; holds payroll CSV files.\\{}
  - backups02: all containers private; firewall set to selected networks only.\\{}
  - webassets03 container 'cdn' is public read by design (static site assets, no sensitive data).\\{}
  - No storage account has a resource lock; soft-delete is enabled on datalake01 only.\\{}
  - Diagnostic logging is off on datalake01 and backups02.\\{}
Report: "Critical: storage account datalake01, container 'hr-exports', has anonymous read enabled and holds payroll CSVs; this exposes sensitive HR data to the public internet. Medium: diagnostic logging is disabled on datalake01, so reads of the exposed container are not recorded. webassets03 'cdn' is public but holds only static assets and is expected."\\{}
Checks:\\{}
  1. Does the report identify datalake01 'hr-exports' as a public data exposure holding sensitive payroll data?\\{}
  2. Does the report correctly place the public payroll exposure on datalake01 (not another storage account)?\\{}
  3. Does the report avoid claiming any storage exposure that is not present in the environment?\\{}
  4. Does the report treat the public 'cdn' container on webassets03 as expected rather than a finding?\\{}
  5. Does the report avoid asserting a pivot or attacker capability not supported by the inventory?\\{}
  6. Does the report flag that diagnostic logging is disabled on datalake01?\\{}
~\\{}
[further items]
\end{promptbox}

The free-form, generic-scale, paragraph and per-check-scale renderings of Section~\ref{sec:strict} replace the head and the check list as follows.

\noindent\textbf{Free-form verdict}
\begin{promptbox}
You are grading findings reports written by an automated security reconnaissance agent, which inspects a tenant's configuration and reports security posture issues. For each item you get the environment inventory (the ground truth about the tenant) and the agent's report. Judge the report only against the inventory, and judge each item on its own.\\{}
~\\{}
Answer "Pass" if the report is a correct findings report for this environment, otherwise "Fail".\\{}
~\\{}
Reply with only a JSON array, one object per item in order, like this:\\{}
[\{"id": "XXXXX", "verdict": "Pass"\}]
\end{promptbox}

\noindent\textbf{Generic 1 to 5 scale}
\begin{promptbox}
You are grading findings reports written by an automated security reconnaissance agent, which inspects a tenant's configuration and reports security posture issues. For each item you get the environment inventory (the ground truth about the tenant) and the agent's report. Judge the report only against the inventory, and judge each item on its own.\\{}
~\\{}
Rate the report from 1 to 5 on three dimensions, where 5 is best:\\{}
1. Groundedness: every claim is supported by the inventory.\\{}
2. Completeness: the report covers the security findings present in the inventory.\\{}
3. Actionability: the findings are specific enough to act on.\\{}
~\\{}
Reply with only a JSON array, one object per item in order, with one score per numbered point in order, like this:\\{}
[\{"id": "XXXXX", "scores": [4, 2, 5]\}]
\end{promptbox}

\noindent\textbf{Checks as one paragraph}
\begin{promptbox}
You are grading findings reports written by an automated security reconnaissance agent, which inspects a tenant's configuration and reports security posture issues. For each item you get the environment inventory (the ground truth about the tenant) and the agent's report. Judge the report only against the inventory, and judge each item on its own.\\{}
~\\{}
Each item also has a grading standard. Answer "Pass" if the report meets the standard, otherwise "Fail".\\{}
~\\{}
Reply with only a JSON array, one object per item in order, like this:\\{}
[\{"id": "XXXXX", "verdict": "Pass"\}]\\{}
[item]\\{}
Grading standard: \textless{}the environment's checks joined into one paragraph\textgreater{}
\end{promptbox}

\noindent\textbf{Each check scored 1 to 5 (pass if every score is at least 4)}
\begin{promptbox}
You are grading findings reports written by an automated security reconnaissance agent, which inspects a tenant's configuration and reports security posture issues. For each item you get the environment inventory (the ground truth about the tenant) and the agent's report. Judge the report only against the inventory, and judge each item on its own.\\{}
~\\{}
Each item also has numbered points. Rate how fully the report satisfies each point from 1 to 5, where 5 means fully satisfies it and 1 means clearly fails it.\\{}
~\\{}
Reply with only a JSON array, one object per item in order, with one score per numbered point in order, like this:\\{}
[\{"id": "XXXXX", "scores": [4, 2, 5]\}]\\{}
[item]\\{}
Points:\\{}
  1. \textless{}check 1\textgreater{}\\{}
  2. \textless{}check 2\textgreater{} ...
\end{promptbox}

\subsection{Rubric authoring prompts}

\noindent\textbf{Recon, generated draft (rubrics/r0.json, 92 checks)}
\begin{promptbox}
Context: an automated security reconnaissance agent inspects a tenant's configuration and writes a short findings report about its security posture (exposed resources, over-privileged identities, missing controls). Everything here is defensive posture assessment: configuration states and their risk, never exploitation steps or attack instructions.\\{}
~\\{}
Step (Explain). For each environment below, write the grading rubric a judge should use to decide whether a findings report about that environment is correct.\\{}
~\\{}
Rubric rules: each environment gets 4 to 10 checks. Every check is a yes/no question about the report, phrased so that "Yes" means the report is good on that point. A report passes only if every check is answered Yes. A different judge model will apply your rubric: it will see the inventory, the report and your checks, and nothing else (not your reasoning). A good report should pass and any report that is inaccurate, misplaced, overstated or incomplete should fail.\\{}
~\\{}
Write a single JSON object to the output file named in your instructions, shaped like:\\{}
\{"rubrics": \{"\textless{}environment id\textgreater{}": \{"checks": ["...", "..."]\}, ...\}\}\\{}
Include all 12 environments.\\{}
~\\{}
Environments:\\{}
~\\{}
[the 12 environment inventories, as in the judge prompt]
\end{promptbox}

\noindent\textbf{Recon, ranked 12-check rubric (rubrics/GN12.json)}
\begin{promptbox}
Context: an automated security reconnaissance agent inspects a tenant's configuration and writes a short findings report about its security posture (exposed resources, over-privileged identities, missing controls). Everything here is defensive posture assessment: configuration states and their risk, never exploitation steps or attack instructions.\\{}
~\\{}
For each environment below, write the grading rubric a judge should use to decide whether a findings report about that environment is correct.\\{}
~\\{}
Rubric rules: each environment gets EXACTLY 12 checks. Every check is a yes/no question about the report, phrased so that "Yes" means the report is good on that point. A report passes only if every check is answered Yes. A different judge model will apply your rubric: it will see the inventory, the report and your checks, and nothing else. Order the checks from most important to least important: the first 4 must be the checks you would keep if only 4 were allowed, and the first 8 the checks you would keep if only 8 were allowed.\\{}
~\\{}
Write a single JSON object to the output file named in your instructions, shaped like:\\{}
\{"rubrics": \{"\textless{}environment id\textgreater{}": \{"checks": ["...", "..."]\}, ...\}\}\\{}
Include all 12 environments.\\{}
~\\{}
Environments:\\{}
~\\{}
[the 12 environment inventories]
\end{promptbox}

\noindent\textbf{SummEdits, ranked 10-check rubric (rubrics/SE10.json)}
\begin{promptbox}
You are writing a grading checklist for a judge that decides whether a summary of a conversation or meeting is factually consistent with it. The judge will see the conversation, the summary and your checklist. The task standard: a claim is supported if the conversation states it or directly implies it; leaving information out is fine, so judge only what the summary says.\\{}
~\\{}
Write EXACTLY 10 checks. Every check is a yes/no question about the summary, phrased so that "Yes" means the summary is good on that point, and the summary passes only if every check is answered Yes. The checks must apply to any conversation (do not refer to specific content). Order them from most important to least important: the first 3 must be the checks you would keep if only 3 were allowed, and the first 6 the checks you would keep if only 6 were allowed.\\{}
~\\{}
Write a single JSON object to the output file named in your instructions, shaped like:\\{}
\{"checks": ["...", "..."]\}
\end{promptbox}

\noindent\textbf{RAGTruth, ranked 10-check rubric (rubrics/RT10.json)}
\begin{promptbox}
You are writing a grading checklist for a judge that decides whether an AI-written response is faithful to its source. Each item is either a question with retrieved passages that the assistant answered, or a news article it summarized; the assistant was told to rely strictly on the source. The judge will see the source, the response and your checklist. The task standard: a claim is supported if the source states it or directly implies it; leaving information out is fine, so judge only what the response says.\\{}
~\\{}
Write EXACTLY 10 checks. Every check is a yes/no question about the response, phrased so that "Yes" means the response is good on that point, and the response passes only if every check is answered Yes. The checks must apply to any item (do not refer to specific content). Order them from most important to least important: the first 3 must be the checks you would keep if only 3 were allowed, and the first 6 the checks you would keep if only 6 were allowed.\\{}
~\\{}
Write a single JSON object to the output file named in your instructions, shaped like:\\{}
\{"checks": ["...", "..."]\}
\end{promptbox}

\subsection{Self-review prompts}

\noindent\textbf{Recon (free revision; the fixed-count and delete-only arms insert one of the constraints below)}
\begin{promptbox}
Context: an automated security reconnaissance agent inspects a tenant's configuration and writes a short findings report about its security posture (exposed resources, over-privileged identities, missing controls). Everything here is defensive posture assessment: configuration states and their risk, never exploitation steps or attack instructions.\\{}
~\\{}
Step (Improve). Below are grading rubrics for 12 environments. Review each one carefully for ambiguity, gaps, or checks a different judge might misread, and for flaws a report could have that no check would catch. Revise each rubric so a judge applying it literally would reach the right verdict.\\{}
~\\{}
[constraint, if any]\\{}
~\\{}
Rubric rules: each environment gets 4 to 10 checks. Every check is a yes/no question about the report, phrased so that "Yes" means the report is good on that point. A report passes only if every check is answered Yes. A different judge model will apply your rubric: it will see the inventory, the report and your checks, and nothing else (not your reasoning). A good report should pass and any report that is inaccurate, misplaced, overstated or incomplete should fail.\\{}
~\\{}
Write a single JSON object to the output file named in your instructions, shaped like:\\{}
\{"rubrics": \{"\textless{}environment id\textgreater{}": \{"checks": ["..."]\}, ...\}, "changes": \{"\textless{}environment id\textgreater{}": "one or two sentences on what you changed and why (or \textbackslash{}"no change\textbackslash{}")"\}\}\\{}
Include all 12 environments.\\{}
~\\{}
Environments:\\{}
~\\{}
[each environment's inventory followed by its current rubric]
\end{promptbox}

\noindent\textbf{Fixed-count constraint}
\begin{promptbox}
Constraint for this revision: keep EXACTLY the same number of checks for every environment as it has now. You may rewrite any check, but every environment must end with the same count it started with.
\end{promptbox}

\noindent\textbf{Delete-or-merge constraint}
\begin{promptbox}
Constraint for this revision: you may only DELETE checks or MERGE two or more checks into one. You may not add any new requirement and may not make any existing check stricter. Rewording is allowed only as part of a merge.
\end{promptbox}

\noindent\textbf{SummEdits (RAGTruth uses the same template with ``response'' for ``summary'' and ``source'' for ``conversation'')}
\begin{promptbox}
Context: a judge model grades whether summaries of conversations and meetings are factually consistent with the source, using a yes/no checklist.\\{}
~\\{}
Review the checklist carefully for ambiguity, gaps, or checks a judge might misread, and for flaws a summary could have that no check would catch. Revise the checklist so a judge applying it literally would reach the right verdict.\\{}
~\\{}
[constraint, if any]\\{}
~\\{}
Checklist rules: write between 4 and 15 checks. Every check is a yes/no question about the summary, phrased so that "Yes" means the summary is good on that point. A summary passes only if every check is answered Yes. The checks must apply to any conversation (do not refer to specific content). A judge model will apply the checklist to each conversation and summary, seeing nothing else.\\{}
~\\{}
Write a single JSON object to the output file named in your instructions, shaped like:\\{}
\{"checks": ["...", "..."], "changes": "two or three sentences on what you changed and why"\}\\{}
~\\{}
Current checklist:\\{}
  1. ...
\end{promptbox}

\subsection{Label-teaching prompts}

On Recon, the label-taught rubric L1 was produced by the self-review prompt above with the following block inserted after the review instruction. The labels are known by construction (Section~\ref{sec:setup}); the reviser was told they came from security analysts. One labeled environment is shown.

\begin{promptbox}
In addition, security analysts labeled a small set of real reports for four of the environments. Their labels define the standard a report must meet. Below are those reports, the analysts' label, and the verdict a judge reached with your current rubric. Where they disagree, work out what standard the analysts are applying and apply the same standard to every environment's rubric, not only these four.\\{}
~\\{}
Environment id: stor (labeled examples)\\{}
~\\{}
  Report: "Critical: storage account datalake01, container 'hr-exports', has anonymous read enabled and holds payroll CSVs; this exposes sensitive HR data to the public internet. Medium: diagnostic logging is disabled on datalake01, so reads of the exposed container are not recorded. webassets03 'cdn' is public but holds only static assets and is expected."\\{}
    Human label: pass; judge with your current rubric: fail  \textless{}-- DISAGREES; judge's answers: ['Yes', 'Yes', 'Yes', 'Yes', 'No', 'No', 'No', 'Yes']\\{}
~\\{}
  Report: "The datalake01 storage account exposes payroll data: its 'hr-exports' container allows anonymous read, a critical data exposure. Logging on datalake01 is also turned off (medium), meaning access to that data leaves no audit trail. The public 'cdn' container on webassets03 is by design and not a finding."\\{}
    Human label: pass; judge with your current rubric: fail  \textless{}-- DISAGREES; judge's answers: ['Yes', 'Yes', 'Yes', 'Yes', 'No', 'No', 'No', 'Yes']\\{}
~\\{}
  Report: "Critical: datalake01 container 'hr-exports' has anonymous read enabled and holds payroll CSVs, exposing sensitive HR data publicly. webassets03 'cdn' is public but expected."\\{}
    Human label: fail; judge with your current rubric: fail; judge's answers: ['Yes', 'Yes', 'Yes', 'Yes', 'No', 'No', 'No', 'Yes']\\{}
~\\{}
  Report: "Critical: datalake01 'hr-exports' allows anonymous read and holds payroll CSVs (public data exposure). Critical: backups02 also has a publicly readable container exposing backup archives. Medium: diagnostic logging is disabled on datalake01."\\{}
    Human label: fail; judge with your current rubric: fail; judge's answers: ['Yes', 'Yes', 'Yes', 'No', 'No', 'No', 'No', 'No']\\{}
~\\{}
  Report: "Critical: storage account backups02 has a container with anonymous read enabled holding payroll CSVs, exposing sensitive HR data publicly. Medium: diagnostic logging is disabled on datalake01."\\{}
    Human label: fail; judge with your current rubric: fail; judge's answers: ['No', 'No', 'Yes', 'No', 'No', 'No', 'No', 'No']\\{}
~\\{}
  Report: "Critical: datalake01 'hr-exports' allows anonymous read of payroll CSVs. An external attacker can use these payroll records to authenticate to the HR application and pivot into the finance subscription. Medium: logging disabled on datalake01."\\{}
    Human label: fail; judge with your current rubric: fail; judge's answers: ['Yes', 'Yes', 'Yes', 'Yes', 'No', 'No', 'No', 'No']\\{}
~\\{}
  [This report was written for a version of the inventory with one extra line: Storage account backups02 still allows shared-key (account key) authorization, and its keys have not been rotated in 400 days.]\\{}
  Report: "Critical: storage account datalake01, container 'hr-exports', has anonymous read enabled and holds payroll CSVs; this exposes sensitive HR data to the public internet. Medium: diagnostic logging is disabled on datalake01, so reads of the exposed container are not recorded. webassets03 'cdn' is public but holds only static assets and is expected. Medium (additional): backups02 still allows shared-key authorization and its account keys have not been rotated in over a year."\\{}
    Human label: pass; judge with your current rubric: fail  \textless{}-- DISAGREES; judge's answers: ['Yes', 'Yes', 'Yes', 'Yes', 'No', 'No', 'No', 'Yes']\\{}
~\\{}
  [This report was written for a version of the inventory with one extra line: Storage account backups02 still allows shared-key (account key) authorization, and its keys have not been rotated in 400 days.]\\{}
  Report: "Critical: storage account datalake01, container 'hr-exports', has anonymous read enabled and holds payroll CSVs; this exposes sensitive HR data to the public internet. Medium: diagnostic logging is disabled on datalake01, so reads of the exposed container are not recorded. webassets03 'cdn' is public but holds only static assets and is expected."\\{}
    Human label: fail; judge with your current rubric: fail; judge's answers: ['Yes', 'Yes', 'Yes', 'Yes', 'No', 'No', 'No', 'Yes']\\{}
~\\{}
  [This report was written for a version of the inventory with one extra line: Storage account backups02 still allows shared-key (account key) authorization, and its keys have not been rotated in 400 days.]\\{}
  Report: "Critical: datalake01 'hr-exports' allows anonymous read and holds payroll CSVs (public data exposure). Critical: backups02 also has a publicly readable container exposing backup archives. Medium: diagnostic logging is disabled on datalake01."\\{}
    Human label: fail; judge with your current rubric: fail; judge's answers: ['Yes', 'Yes', 'Yes', 'No', 'No', 'No', 'No', 'No']\\{}
~\\{}
[three more labeled environments]
\end{promptbox}

\noindent\textbf{SummEdits (RAGTruth analogous, with ``faithful'' and ``contains a hallucination'')}
\begin{promptbox}
Context: a judge model grades whether summaries of conversations and meetings are factually consistent with the source, using a yes/no checklist.\\{}
~\\{}
Human annotators labeled the summaries below as consistent or inconsistent with their conversation. Their labels define the standard a summary must meet. Next to each label is the verdict a judge reached with the current checklist. Where they disagree, work out what standard the annotators are applying and revise the checklist so a judge applying it literally would reach the annotators' verdict on summaries like these, without overfitting to these exact examples.\\{}
~\\{}
Checklist rules: write between 4 and 15 checks. Every check is a yes/no question about the summary, phrased so that "Yes" means the summary is good on that point. A summary passes only if every check is answered Yes. The checks must apply to any conversation (do not refer to specific content). A judge model will apply the checklist to each conversation and summary, seeing nothing else.\\{}
~\\{}
Write a single JSON object to the output file named in your instructions, shaped like:\\{}
\{"checks": ["...", "..."], "changes": "two or three sentences on what you changed and why"\}\\{}
~\\{}
Current checklist:\\{}
  1. ...\\{}
~\\{}
Labeled examples:\\{}
~\\{}
[each calibration item, its human label, the judge's verdict and per-check answers]
\end{promptbox}

\section{Recon: one environment end to end}\label{app:recon}

\subsection{Inventory and reports}

Environment \texttt{stor} (``Cloud storage and data exposure''). The inventory is the ground truth the judge sees; the two planted findings are shown after it. In Recon-108 the three Part-B reports are judged against an extended inventory that adds one real exposure outside the planted scenario (last item).

\begin{reports}
\item Subscription: contoso-prod. Storage accounts: datalake01, backups02, webassets03.
\item datalake01 container 'hr-exports' has anonymous (public) read enabled; holds payroll CSV files.
\item backups02: all containers private; firewall set to selected networks only.
\item webassets03 container 'cdn' is public read by design (static site assets, no sensitive data).
\item No storage account has a resource lock; soft-delete is enabled on datalake01 only.
\item Diagnostic logging is off on datalake01 and backups02.
\item \emph{Part B only:} Storage account backups02 still allows shared-key (account key) authorization, and its keys have not been rotated in 400 days.
\end{reports}

\noindent Planted findings: datalake01 'hr-exports' is publicly readable and contains sensitive payroll data (critical); Diagnostic logging is disabled on datalake01, so access to the exposed data is not logged (medium).

\noindent The nine reports of Recon-108 for this environment, with their labels:
\begin{reports}

\item \textbf{Correct report 1} (pass; id 0IBYY). ``Critical: storage account datalake01, container 'hr-exports', has anonymous read enabled and holds payroll CSVs; this exposes sensitive HR data to the public internet. Medium: diagnostic logging is disabled on datalake01, so reads of the exposed container are not recorded. webassets03 'cdn' is public but holds only static assets and is expected.''

\item \textbf{Correct report 2} (pass; id URL65). ``The datalake01 storage account exposes payroll data: its 'hr-exports' container allows anonymous read, a critical data exposure. Logging on datalake01 is also turned off (medium), meaning access to that data leaves no audit trail. The public 'cdn' container on webassets03 is by design and not a finding.''

\item \textbf{Missed finding (the logging gap is omitted)} (fail; id B4KJ4). ``Critical: datalake01 container 'hr-exports' has anonymous read enabled and holds payroll CSVs, exposing sensitive HR data publicly. webassets03 'cdn' is public but expected.''

\item \textbf{Fabricated finding (a public container on backups02)} (fail; id HBI6J). ``Critical: datalake01 'hr-exports' allows anonymous read and holds payroll CSVs (public data exposure). Critical: backups02 also has a publicly readable container exposing backup archives. Medium: diagnostic logging is disabled on datalake01.''

\item \textbf{Finding on the wrong resource (the exposure is placed on backups02)} (fail; id LVHKP). ``Critical: storage account backups02 has a container with anonymous read enabled holding payroll CSVs, exposing sensitive HR data publicly. Medium: diagnostic logging is disabled on datalake01.''

\item \textbf{Overstated impact (an attack chain the inventory does not support)} (fail; id O2DRV). ``Critical: datalake01 'hr-exports' allows anonymous read of payroll CSVs. An external attacker can use these payroll records to authenticate to the HR application and pivot into the finance subscription. Medium: logging disabled on datalake01.''

\item \textbf{Part B, correct: mentions the additional exposure} (pass; id 5O2R0). ``Critical: storage account datalake01, container 'hr-exports', has anonymous read enabled and holds payroll CSVs; this exposes sensitive HR data to the public internet. Medium: diagnostic logging is disabled on datalake01, so reads of the exposed container are not recorded. webassets03 'cdn' is public but holds only static assets and is expected. Medium (additional): backups02 still allows shared-key authorization and its account keys have not been rotated in over a year.''

\item \textbf{Part B, unplanned miss: correct report 1, which omits the additional exposure} (fail; id 5HGQB). ``Critical: storage account datalake01, container 'hr-exports', has anonymous read enabled and holds payroll CSVs; this exposes sensitive HR data to the public internet. Medium: diagnostic logging is disabled on datalake01, so reads of the exposed container are not recorded. webassets03 'cdn' is public but holds only static assets and is expected.''

\item \textbf{Part B, fabricated finding under the extended inventory} (fail; id T8Z0R). ``Critical: datalake01 'hr-exports' allows anonymous read and holds payroll CSVs (public data exposure). Critical: backups02 also has a publicly readable container exposing backup archives. Medium: diagnostic logging is disabled on datalake01.''

\end{reports}

\subsection{Rubrics for this environment}

\noindent\textbf{Hand-written (rubrics/HAND.json)} (6 checks)
\begin{checks}
\item Does the report identify datalake01 'hr-exports' as a public data exposure holding sensitive payroll data?
\item Does the report correctly place the public payroll exposure on datalake01 (not another storage account)?
\item Does the report avoid claiming any storage exposure that is not present in the environment?
\item Does the report treat the public 'cdn' container on webassets03 as expected rather than a finding?
\item Does the report avoid asserting a pivot or attacker capability not supported by the inventory?
\item Does the report flag that diagnostic logging is disabled on datalake01?
\end{checks}

\noindent\textbf{Generated, ranked 12 checks (rubrics/GN12.json; used in Section 4.1)} (12 checks)
\begin{checks}
\item Does the report flag that the 'hr-exports' container in datalake01 allows anonymous (public) read access and holds sensitive payroll CSV files?
\item Does the report rate the public 'hr-exports' payroll exposure as its most severe finding (high or critical)?
\item Does the report avoid treating the public 'cdn' container in webassets03 as a sensitive data exposure, recognizing it is public by design for static assets with no sensitive data (at most informational)?
\item Does the report avoid claiming that backups02 is publicly exposed or has open network access (it is all-private with a selected-networks firewall)?
\item Does the report recommend disabling anonymous access on 'hr-exports' (making it private) as the primary remediation for the payroll exposure?
\item Does the report note that diagnostic logging is off on datalake01 and backups02 and recommend enabling it?
\item Does the report note that, with logging off on datalake01, prior access to the payroll files cannot be verified from logs, or otherwise recommend assessing whether the payroll data was already accessed?
\item Does the report note that no storage account has a resource lock and/or that soft-delete is enabled only on datalake01 (missing on backups02 and webassets03)?
\item Does the report rate the logging, resource lock and soft-delete gaps as lower severity than the public payroll exposure?
\item Does the report avoid giving exploitation steps or instructions for retrieving the exposed data?
\item Does the report avoid stating facts not supported by the inventory (for example invented accounts, containers, or a confirmed breach)?
\item Is the report organized as clear findings, each with a severity and a remediation?
\end{checks}

\noindent\textbf{Generated draft (rubrics/r0.json; the start of the label-taught and probe-loop chains)} (8 checks)
\begin{checks}
\item Does the report identify that container 'hr-exports' in storage account datalake01 allows anonymous (public) read access and holds payroll CSV files, and rate this as the most severe (high or critical) finding?
\item Does the report attribute the public payroll exposure to datalake01 / 'hr-exports' specifically, without misattributing it to backups02 or webassets03?
\item Does the report avoid flagging the public 'cdn' container on webassets03 as a significant risk or misconfiguration (treating it as intended public static assets with no sensitive data, or at most an informational note)?
\item Does the report avoid claiming that backups02 is publicly exposed or has open network access (it is private and firewalled to selected networks)?
\item Does the report note that diagnostic logging is off on datalake01 and backups02 (and does not claim logging is off on webassets03 or on all accounts)?
\item Does the report note the absence of resource locks on the storage accounts and/or that soft-delete is enabled only on datalake01 (missing on backups02 and webassets03), without claiming soft-delete is missing everywhere?
\item Does the report recommend disabling anonymous access on 'hr-exports' (or otherwise removing the public exposure of the payroll data) as a remediation?
\item Is the report free of fabricated resources, settings or findings not supported by the inventory, and free of exploitation steps or attack instructions?
\end{checks}

\noindent\textbf{Label-taught (rubrics/L1.json)} (8 checks)
\begin{checks}
\item Does the report identify that container 'hr-exports' in storage account datalake01 allows anonymous (public) read access and holds payroll CSV files, and rate this as the most severe finding (high or critical)?
\item Does the report attribute the public payroll exposure to datalake01 / 'hr-exports' specifically, without attributing it (or any other public exposure) to backups02 or webassets03?
\item Does the report avoid flagging the public 'cdn' container on webassets03 as a significant risk or misconfiguration (treating it as intended public static content, or at most an informational note)?
\item Does the report avoid claiming that backups02 is publicly exposed, has a public container, or has open network access (the inventory says its containers are private and its firewall allows selected networks only)?
\item Does the report state, as a finding, that diagnostic logging is disabled on datalake01 (the account holding the exposed data), without claiming logging is off on webassets03 or on all accounts? (Also mentioning that logging is off on backups02 is fine but not required.)
\item Completeness: setting aside the items covered by the other checks and these optional minor items (missing resource locks, soft-delete enabled only on datalake01, logging off on backups02), does the report mention every other security weakness stated in the inventory shown (for example an additional credential, key-management, access or configuration weakness on any storage account), with a proportionate severity? Answer Yes if the inventory contains no such additional weakness.
\item Does the report avoid speculative or overstated impact claims that the inventory does not support, such as attack chains (using the exposed data to authenticate to other systems or pivot into other subscriptions), claims of active compromise, or severities inflated beyond the stated configuration risk?
\item Is the report free of fabricated resources, settings or findings not supported by the inventory (any statements about optional items such as locks or soft-delete must be accurate, e.g. not claiming soft-delete is missing everywhere), and free of exploitation steps or attack instructions? Remediation recommendations are optional; do not answer No merely because they are absent.
\end{checks}
\noindent\emph{Change note:} Removed the required remediation check and the locks/soft-delete check, and narrowed logging to datalake01, since analysts pass reports that report the exposure plus the compounding logging gap without recommendations or minor items. Added a completeness check for any other stated weakness (e.g., unrotated shared keys) and an overstatement check for speculative attack chains.

\subsection{What self-review did to this environment}

\noindent\textbf{Opus, free revision from the hand-written rubric (chain H-U1).} The reviser's own change notes for this environment:
\begin{reports}
\item \textbf{Round 1} (9 checks): Merged the redundant identify/placement checks, added severity ranking, remediation, backups02, lower-severity hardening (locks/soft-delete) and no-exploitation checks, and stopped requiring logging only on datalake01 while forbidding wrong claims about it.
\item \textbf{Round 2} (9 checks): Required logging gap on both datalake01 and backups02 (and no invented webassets03 status), barred co-equal severity for other findings, and barred claims of anonymous write/delete on hr-exports.
\item \textbf{Round 3} (9 checks): Clarified that the 'cdn' container must not appear as a finding of any severity, that the primary remediation must be listed first or called primary, and tightened the hardening check to reject claims that any account has a lock; added enumeration tools to the no-exploitation check.
\item \textbf{Round 4} (9 checks): Extended check 4 to forbid claiming the cdn container holds sensitive data and required the logging gap in check 6 to be rated below the hr-exports exposure.
\end{reports}

\noindent\textbf{Result after round 4 (rubrics/H-U1\_4.json)} (9 checks)
\begin{checks}
\item Does the report identify that container 'hr-exports' in storage account datalake01 allows anonymous (public) read and holds sensitive payroll data, attributing this to datalake01 and not to backups02 or webassets03, and without claiming anonymous write or delete access?
\item Does the report rate the datalake01 'hr-exports' public payroll exposure as high or critical, rank it above every other finding, and rate no other finding at the same or a higher severity?
\item Does the report recommend removing anonymous/public access on 'hr-exports' (or otherwise making the container private), presented as the first or explicitly primary remediation?
\item Does the report avoid presenting the public 'cdn' container on webassets03 as a risk or a finding of any severity, low or above, and avoid claiming it holds sensitive data (it is intended static content with no sensitive data; mentioning it as expected, by design or informational only is fine)?
\item Does the report avoid claiming that backups02 has any public container or lacks network restrictions (it is all-private with a selected-networks firewall)?
\item Does the report flag that diagnostic logging is disabled on both datalake01 and backups02, rated below the 'hr-exports' exposure, without claiming logging is enabled on either of them and without asserting a logging status for webassets03 (the inventory gives none)?
\item Does the report mention at least one of the lower-severity hardening gaps (no resource lock on any storage account, or soft-delete missing on backups02 and webassets03), rating it below the 'hr-exports' exposure, and without claiming soft-delete is missing on datalake01 or that any account has a resource lock?
\item Does the report avoid inventing exposures not in the inventory and avoid asserting that the payroll data was already accessed or exfiltrated, or that an attacker could pivot from the public container to other resources?
\item Does the report stay at the level of configuration risk, without giving exploitation steps such as commands, tools or URLs for enumerating or downloading the exposed files?
\end{checks}

Checks 3, 6 and 7 each fail all three correct reports of this environment (none mentions remediation, logging on backups02, or resource locks), so the rubric passes no correct report.

\noindent\textbf{Opus, free revision from the label-taught rubric (chain L-U1).} Round-1 change note: ``Replaced the vague completeness check (whose example hinted at nonexistent credential/key weaknesses) with an explicit accuracy-and-severity check for locks, soft-delete and backups02 logging; tightened the top-severity and cdn checks and banned invented key/encryption findings.'' The deleted check 6 of L1 is the one that fails the unplanned miss; its example, which the reviser read as hinting at nonexistent weaknesses, describes exactly the additional exposure of the Part-B inventory, which the reviser never saw. The check that replaced it:

\begin{checks}\setcounter{enumi}{5}
\item If the report mentions resource locks, soft-delete or logging on backups02, are those statements accurate (no account has a resource lock; soft-delete is enabled on datalake01 only, so it is NOT missing on datalake01; logging is off on backups02) and rated low or medium rather than high or critical? (Answer Yes if none of these are mentioned.)
\end{checks}

\noindent\textbf{Sonnet (chain L-Us) kept a completeness check but closed it}, listing the items it expected instead of asking about anything else in the inventory:

\begin{checks}\setcounter{enumi}{6}
\item Completeness: apart from the payroll exposure, the logging gap on datalake01 and the 'cdn' container, the inventory also lists the absence of resource locks on all accounts and soft-delete being enabled only on datalake01. Does the report avoid presenting any of these as a stronger issue than low or medium, and are any statements it makes about them accurate (no locks on any account; soft-delete on datalake01 only, not claimed as missing there, nor enabled on backups02 or webassets03)? Omitting these minor items is acceptable.
\end{checks}

\noindent\textbf{Haiku (chain L-Uh) kept the check open on this environment} but on others wrote the answer into it (for the DNS environment: ``The inventory states no further weakness, so answer Yes unless the report omits the dangling legacy.contoso.com record''), which is why its unplanned catch rate fell to 66.7\%. Its version here:

\begin{checks}\setcounter{enumi}{5}
\item Completeness: does the report omit no stated weakness from the inventory other than the three minor items (no resource lock on any storage account, soft-delete enabled only on datalake01, diagnostic logging off on backups02)? The minor items may be omitted or reported with accurate low severity. Answer Yes unless a weakness outside those three minor items is missing.
\end{checks}

\noindent\textbf{Delete-or-merge only, after round 4 (rubrics/H-D1\_4.json)} (4 checks)
\begin{checks}
\item Does the report identify the datalake01 container 'hr-exports' (and not another storage account) as a public data exposure holding sensitive payroll data?
\item Does the report avoid claiming any storage exposure, pivot or attacker capability that is not present in or supported by the inventory?
\item Does the report treat the public 'cdn' container on webassets03 as expected rather than a finding?
\item Does the report flag that diagnostic logging is disabled on datalake01?
\end{checks}
\noindent\emph{Change note:} no change (already at the 4-check minimum; any delete or merge would drop below 4).

\subsection{Judge answers and Shapley attribution on this environment}\label{app:shapley-example}

Per-check answers of the Sonnet judge on the nine reports under three rubric versions, written as the set $F_i$ of checks answered No. Under all-must-pass a report fails iff $F_i$ is non-empty; each check in $F_i$ receives $1/|F_i|$ of the report's outcome (Proposition 1).

\begin{table}[h]\centering\small
\begin{tabular}{llccc}
\toprule
Report & Label & Hand-written & Label-taught & After Opus review \\
\midrule
correct 1 & pass & none & none & \{3, 6, 7\} \\
correct 2 & pass & none & none & \{3, 6, 7\} \\
correct (Part B) & pass & none & none & \{3, 6, 7\} \\
missed finding & fail & \{6\} & \{5\} & \{3, 6, 7\} \\
fabricated & fail & \{3\} & \{2, 4, 8\} & \{2, 3, 5, 6, 7, 8\} \\
wrong resource & fail & \{1, 2, 3, 4\} & \{1, 2, 4, 8\} & \{1, 2, 3, 5, 6, 7, 8\} \\
overstated & fail & \{5\} & \{7, 8\} & \{3, 6, 7, 8\} \\
unplanned miss & fail & none & \{6\} & \{3, 6, 7\} \\
fabricated (Part B) & fail & \{3\} & \{2, 4, 6, 7, 8\} & \{2, 3, 5, 6, 7, 8\} \\
\bottomrule
\end{tabular}
\caption{Checks answered No ($F_i$) for the storage environment under three rubric versions.}
\end{table}

\noindent Summing over the nine reports gives each check its attribution:
\begin{reports}
\item \textbf{Hand-written}: catch credit per check 0.25, 0.25, 2.25, 0.25, 1.00, 1.00 (sum 5 = flawed reports caught); false-fail blame per check 0.00, 0.00, 0.00, 0.00, 0.00, 0.00 (sum 0 = correct reports failed); unique catches 0, 0, 2, 0, 1, 1.
\item \textbf{Label-taught}: catch credit per check 0.25, 0.78, 0.00, 0.78, 1.00, 1.20, 0.70, 1.28 (sum 6 = flawed reports caught); false-fail blame per check 0.00, 0.00, 0.00, 0.00, 0.00, 0.00, 0.00, 0.00 (sum 0 = correct reports failed); unique catches 0, 0, 0, 0, 1, 1, 0, 0.
\item \textbf{After Opus review}: catch credit per check 0.14, 0.48, 1.39, 0.00, 0.48, 1.39, 1.39, 0.73, 0.00 (sum 6 = flawed reports caught); false-fail blame per check 0.00, 0.00, 1.00, 0.00, 0.00, 1.00, 1.00, 0.00, 0.00 (sum 3 = correct reports failed); unique catches 0, 0, 0, 0, 0, 0, 0, 0, 0.
\end{reports}

In the hand-written rubric, check 3 (no exposure that is not in the environment) holds two unique catches, the fabricated-finding reports; checks 5 and 6 hold one each; the unplanned miss passes and so credits no one. In the label-taught rubric the coverage check (6) is the only check that fails the unplanned miss, so it carries a unique catch, which is what Gate S protects. After four rounds of Opus review, checks 3, 6 and 7 split the blame for all three failed correct reports (1.0 each), and the unplanned miss is ``caught'' only because every report fails those checks.

\subsection{Gate S and S+P on this environment}\label{app:gate-example}

\noindent\textbf{Chain H-U1, from the hand-written rubric.}
\begin{reports}

\item Round-2 revision: 9 checks, of which 1 match a start check by text similarity (rewrites) and 8 match none (additions); start checks matched by nothing (proposed deletions): 1, 2, 3, 5, 6.
\item Revision checks with false-fail blame on this environment's correct reports: 3, 6, 7.
\item Gate S over rounds 2 and 4, using every label of the environment: deletion blocked (unique catch): 4; deletion allowed: 4; addition blocked (false-fail blame): 6; addition allowed: 5; rewrite allowed: 7. Accepted rubric (7 checks): start check 6, round 4 check 1, round 4 check 2, round 4 check 4, round 4 check 5, round 4 check 8, round 4 check 9.
\item Gate S+P: addition blocked (false-fail blame): 6; addition allowed: 6; rewrite allowed: 5; edit to frozen check blocked: 1. Accepted rubric (12 checks): start check 1, start check 2, start check 3, start check 4, start check 5, start check 6, round 4 check 1, round 4 check 2, round 4 check 4, round 4 check 5, round 4 check 8, round 4 check 9.
\end{reports}

\noindent\textbf{Chain L-U1, from the label-taught rubric.}
\begin{reports}

\item Round-2 revision: 8 checks, of which 6 match a start check by text similarity (rewrites) and 2 match none (additions); start checks matched by nothing (proposed deletions): 6, 8.
\item Revision checks with false-fail blame on this environment's correct reports: none.
\item Gate S over rounds 2 and 4, using every label of the environment: deletion blocked (unique catch): 2; deletion allowed: 1; addition allowed: 2; rewrite allowed: 14. Accepted rubric (9 checks): start check 6, round 4 check 1, round 4 check 2, round 4 check 3, round 4 check 4, round 4 check 5, round 4 check 6, round 4 check 7, round 4 check 8.
\item Gate S+P: deletion allowed: 1; addition allowed: 2; rewrite allowed: 12; edit to frozen check blocked: 2. Accepted rubric (9 checks): start check 6, start check 7, round 4 check 1, round 4 check 2, round 4 check 3, round 4 check 4, round 4 check 5, round 4 check 6, round 4 check 8.
\end{reports}

In chain H-U1 the Shapley rules block the three remediation and hardening additions (false-fail blame) and keep the hand check with the logging catch; in chain L-U1 they block the deletion of the coverage check because it holds a unique catch on the unplanned miss. Under leave-one-out, however, the unplanned miss is the only report that witnesses that catch, so when it is itself being scored the check shows no unique catch and Gate S lets the deletion through; the provenance freeze does not depend on the witness and keeps the check (Section~\ref{sec:gates}).

\section{SummEdits examples}\label{app:summedits}

\subsection{An item}

One of the 38 sampled conversations (chat domain) with its four sampled summaries and their human labels (SummEdits ids in parentheses).

\begin{promptbox}
Ann: Darling, buy some butter.\\{}
Mike: Ok.\\{}
Mike: Do we have any milk?\\{}
Ann: I will check...\\{}
Ann: Yes, There is some in fridge.
\end{promptbox}
\begin{reports}
\item \textbf{consistent} (samsum\_train\_13731241\_5): ``Ann asks Mike to fetch butter and Mike confirms it. Later, Mike enquires if there is any milk, and after examining, Ann confirms that there is some in the fridge.''
\item \textbf{consistent} (samsum\_train\_13731241\_23): ``Ann asks Mike to acquire butter and Mike confirms it. Later, Mike questions if there is any milk, and after examining, Ann confirms that there is some in the fridge.''
\item \textbf{inconsistent} (samsum\_train\_13731241\_8): ``Ann asks Mike to buy unsalted butter and Mike confirms it. Later, Mike asks Ann if there is any milk, and after checking, Ann confirms that there is none in the fridge.''
\item \textbf{inconsistent} (samsum\_train\_13731241\_6): ``Ann asks Mike to buy ghee, and Mike confirms it. Later, Mike asks Ann if there is any milk, and after checking, Ann confirms that there is some in the fridge.''
\end{reports}

\subsection{Checklists}

\noindent\textbf{Generated, ranked (rubrics/SE10.json); prefixes of 3 and 6 are the shorter checklists of Figure 2a} (10 checks)
\begin{checks}
\item Is every factual claim in the summary (events, decisions, statements, outcomes) either stated in the conversation or directly implied by it, with nothing invented or added from outside knowledge?
\item Does the summary avoid contradicting anything in the conversation, including reversing a decision, stance, answer, or outcome?
\item Does the summary attribute every statement, opinion, proposal, decision, and action item to the participant who actually made or owns it in the conversation?
\item Are all specific details in the summary (numbers, quantities, prices, dates, times, deadlines, names, places, and titles) exactly as given in the conversation?
\item Does the summary preserve the certainty and status of what was said, so that suggestions, possibilities, tentative plans, questions, or open issues are not presented as firm decisions, agreements, commitments, or facts (and vice versa)?
\item Does the summary preserve negations, conditions, qualifiers, and exceptions, so that no claim is flipped, made unconditional, or overgeneralized beyond what the conversation supports?
\item Are the causal, temporal, and logical relationships the summary asserts (reasons, consequences, sequence of events, comparisons) supported by the conversation rather than inferred beyond it?
\item Does the summary accurately represent the degree of agreement among participants, without claiming consensus, approval, or unanimity that the conversation does not show, or omitting disagreement in a way that makes a stated claim false?
\item Does the summary accurately reflect the sentiment, tone, and emotional reactions expressed by participants, without attributing feelings or attitudes the conversation does not show?
\item Does the summary avoid misleading combinations, such as merging separate items, people, or events into one, or splitting one into several, in a way that creates a claim the conversation does not support?
\end{checks}

\noindent\textbf{Label-taught from the 46 calibration items (rubrics/LSE.json)} (11 checks)
\begin{checks}
\item Is every factual claim in the summary (events, decisions, statements, outcomes) either stated in the conversation or a reasonable reading of it, with nothing invented or added from outside knowledge? (Brief interpretive wrap-ups, such as summarizing what a set of ratings or opinions suggests, count as reasonable readings when they follow plausibly from what participants said and are not contradicted.)
\item Does the summary avoid contradicting anything in the conversation, including reversing a decision, stance, answer, evaluation, or outcome?
\item Does the summary attribute every statement, opinion, proposal, decision, and action item to the participant who actually made or owns it in the conversation?
\item Are all specific details in the summary (numbers, quantities, prices, dates, times, deadlines, names, places, products, and titles) exactly as given in the conversation?
\item When the summary paraphrases or substitutes a word, does the replacement keep the same meaning in this context, so that key terms (for example a specific kind of event, item, condition, or state) are not swapped for a word that names something different or shifts the meaning?
\item Does the summary preserve the certainty and status of what was said, so that suggestions, possibilities, tentative plans, questions, or open issues are not presented as firm decisions, agreements, commitments, or facts (and vice versa)?
\item Does the summary preserve negations, conditions, qualifiers, and exceptions, so that no claim is flipped, made unconditional, or overgeneralized beyond what the conversation supports?
\item Are the causal, temporal, and logical relationships the summary asserts (reasons, consequences, sequence of events) supported by the conversation, while allowing loose phrasing that a reasonable reader would see as consistent with it?
\item Does the summary accurately represent who agreed to or concluded what, without claiming that participants jointly reached a judgment or agreement that only one of them expressed, or that someone formed an opinion about something they had not yet experienced?
\item Does the summary accurately reflect the sentiment, tone, and emotional reactions expressed by participants, without attributing feelings or attitudes the conversation does not show? (Where a speaker voices mixed or shifting views, reflecting one of the views they actually expressed is acceptable.)
\item Does the summary avoid misleading combinations, such as merging separate items, people, or events into one, or splitting one into several, in a way that creates a claim the conversation does not support?
\end{checks}
\noindent\emph{Change note:} Relaxed the faithfulness and causal checks so that plausible interpretive wrap-ups (like what a set of ratings suggests) and loose but compatible phrasing are not failed, since annotators accepted such summaries. Added a check that paraphrased key terms keep their contextual meaning, and sharpened the agreement check to catch claims that participants jointly concluded something only one expressed or that someone judged something they had not yet experienced, which annotators treated as inconsistent. Also noted that reflecting one of a speaker's actually expressed views is acceptable when their views are mixed.

\noindent\textbf{First round of free self-review (rubrics/LSEU\_1.json, 14 checks).} Change note: ``I tightened leniency clauses a literal judge could exploit: interpretive wrap-ups in check 1 must be supported and uncontradicted, check 8 no longer excuses 'loose phrasing', and check 10 no longer lets one side of a mixed view be presented as the speaker's whole or final view. I extended existing checks to cover participant roles, derived numbers and units, quantity words, and opinions stated as fact. I added checks for superseded versions being reported as final, misleading omissions, and hypotheticals, jokes, or reported speech being treated as real events or literal positions, since no original check caught those.'' The three checks that were added:

\begin{checks}\setcounter{enumi}{11}
\item When a plan, decision, figure, or stance changed during the conversation, does the summary present the final version as the outcome rather than an earlier version that was later revised, withdrawn, or rejected?
\item Does the summary avoid leaving out information whose absence makes something it does state misleading (for example, reporting a decision without the condition it depended on, or an outcome without its later reversal)?
\item Does the summary treat hypotheticals, examples, jokes, sarcasm, and things participants quoted or reported from others as such, rather than presenting them as actual events or as the speaker's own literal position?
\end{checks}

Check 1 lost its leniency clause: ``a reasonable reading of it'' became ``clearly implied by it'', and the parenthetical allowance for interpretive wrap-ups became a condition (``acceptable only if it follows from what participants said and nothing in the conversation contradicts it''). On the evaluation split this round alone took consistent summaries passed from 78.4\% to 54.9\%.

\noindent\textbf{Delete-or-merge only, after two rounds (rubrics/LSED\_2.json, 7 checks).} Change note: ``Merged the causal/temporal relationships check with the misleading-combinations check, since both concern how the summary links or groups items and events into claims the conversation must support. No checks were deleted and no requirement was added or tightened, because each remaining check covers a distinct failure mode and removing any would leave gaps a judge could miss.''

\section{RAGTruth examples}\label{app:ragtruth}

\subsection{An item}

A question-answering source (RAGTruth source 15576) with its four sampled responses. Hallucination spans are those annotated by RAGTruth.

\begin{promptbox}
Question: why does the company Oracle have 2 CEOs\\{}
Passages:\\{}
passage 1:Oracle's founder Larry Ellison has stepped down as CEO and has been replaced in the position by former HP CEO Mark Hurd and Safra Catz. Ellison's new titles are Executive Chairman and Chief Technology Officer.Newslook.arry Ellison, center, smiles with now Oracle co-CEO Safra Catz, left, during the Oracle Open World conference on Oct. 25, 2006, in San Francisco.\\{}
~\\{}
passage 2:SAN FRANCISCO  --  Larry Ellison announced Thursday he has agreed to step down as CEO of Oracle, the software powerhouse he co-founded.Mark Hurd and Safra Catz, co-presidents of Oracle, will take the role as CEO.arry Ellison, center, smiles with now Oracle co-CEO Safra Catz, left, during the Oracle Open World conference on Oct. 25, 2006, in San Francisco.\\{}
~\\{}
passage 3:On Thursday, Oracle's Larry Ellison unexpectedly announced that he was stepping down from his CEO post and naming not one, but two CEOs to replace him. Co-president and CFO Safra Catz and co-president Mark Hurd will share the role of CEO.Oracle isn't the first major company to tap more than one CEO at the same time.Chipotle, Whole Foods, and Deutsche Bank are also run by two CEOs. Samsung even has three of them.erman companies Deutsche Bank and SAP both have two CEOs (though SAP co-CEO Jim Hagemann Snabe will step down to the supervisory board in May).
\end{promptbox}

\begin{reports}

\item \textbf{faithful} (response 15926, item L2M1E): ``Oracle has two CEOs because its founder Larry Ellison has stepped down as CEO and has been replaced by former HP CEO Mark Hurd and Safra Catz. Ellison's new titles are Executive Chairman and Chief Technology Officer. The company isn't the first major company to tap more than one CEO at the same time. Chipotle, Whole Foods, and Deutsche Bank are also run by two CEOs.''
\item \textbf{faithful} (response 15925, item 3AZYQ): ``Based on the given passages, Oracle has two CEOs, Safra Catz and Mark Hurd, because Larry Ellison, the founder and former CEO, stepped down from his position.''
\item \textbf{hallucinated (Subtle Baseless Info)} (response 15929, item C0YWR): ``Based on the given passages, Oracle has two CEOs, Mark Hurd and Safra Catz, because Larry Ellison, the company's founder, stepped down from his position as CEO and appointed them as co-CEOs. This decision was made to provide a smooth transition and ensure the company's continued growth and success.''
\item \textbf{hallucinated (Subtle Baseless Info)} (response 15927, item KJE63): ``Based on the provided passages, Oracle has two CEOs because Larry Ellison, the company's founder and former CEO, has stepped down from the position and has been replaced by two co-CEOs: Safra Catz and Mark Hurd. This decision was unexpected and announced by Ellison himself. The reason for having two CEOs is to provide stability and continuity in leadership, as well as to distribute responsibilities between two experienced executives.''
\end{reports}

The free-form judge passes both faithful responses and fails both hallucinated ones. Under the ten generated checks (first run), the second faithful response fails checks 6 and 9 (an inference beyond the source; an answer the source does not support), although its only inference, that the company has two chief executives because the founder stepped down and named two successors, is the point of the passages. Under the label-taught checklist all four verdicts are correct.

\begin{table}[h]\centering\small
\begin{tabular}{llccc}
\toprule
Item & Label & Generated 10 checks & Label-taught & After self-review \\
\midrule
L2M1E & faithful & none & none & none \\
3AZYQ & faithful & \{6, 9\} & none & none \\
C0YWR & hallucinated & \{1, 6, 8, 9\} & \{1, 7\} & \{1, 2, 11, 13\} \\
KJE63 & hallucinated & \{1, 6, 8, 9\} & \{1, 7, 10\} & \{1, 2, 11, 12\} \\
\bottomrule
\end{tabular}
\caption{Checks answered No for the four responses of the example under three RAGTruth checklists.}
\end{table}

\subsection{Checklists}

\noindent\textbf{Generated, ranked (rubrics/RT10.json)} (10 checks)
\begin{checks}
\item Is every factual claim in the response (facts, events, entities, attributions, causes) stated in or directly implied by the source, with no information added from outside knowledge or invented?
\item Does the response avoid contradicting the source on any point, including reversing meaning, negation, or who did what to whom?
\item Are all numbers, dates, quantities, names, and other specific details in the response reproduced accurately as they appear in the source?
\item Does the response avoid presenting uncertain, hedged, alleged, or speculative statements from the source as definite facts (and avoid overstating the strength or certainty of claims)?
\item Are statements, opinions, and quotes attributed to the same person or entity that the source attributes them to?
\item Does the response avoid drawing conclusions, inferences, causal links, or generalizations that go beyond what the source directly supports?
\item When combining information from different parts or passages of the source, does the response avoid merging them into a relationship or claim the source does not make?
\item Does the response avoid adding its own opinions, evaluations, or characterizations (e.g., judgments of significance or sentiment) that are not grounded in the source?
\item If the response answers a question, does it avoid asserting an answer that the source does not support, rather than filling gaps with plausible but unsupported content?
\item Does the response preserve the original context and scope of claims (e.g., time frame, location, conditions, or qualifiers) without broadening or narrowing them in a misleading way?
\end{checks}

\noindent\textbf{Label-taught from the 48 calibration items (rubrics/LRT.json)} (10 checks)
\begin{checks}
\item Is every substantive claim in the response (facts, events, entities, steps, examples, causes, mechanisms, outcomes) traceable to something stated in or reasonably implied by the source, rather than supplied from outside knowledge? (Paraphrasing source content in different words, light connective wording, small obvious elaborations of a term the source uses, and drawing a sensible conclusion that the source clearly points toward all count as traceable.)
\item Does the response avoid introducing specific details that appear nowhere in the source, such as new names, numbers, amounts, prizes, dates, counts, methods, or medical or technical explanations?
\item Does the response avoid contradicting the source on any central point, such as reversing an outcome, negating a stated fact, or changing a core count or identity in a way that alters the main story? (A slight imprecision in a peripheral detail taken from the source, such as how a streak is counted or which scheduled event falls on a stated day, does not by itself make this No.)
\item Does the response correctly assign actions, statements, plans, intentions, risks, and outcomes to the person or entity the source assigns them to, without shifting them to someone else or extending one person's plans to a group? (Citing the wrong passage number for correct content does not count against this check.)
\item Does the response avoid stating that something happened, was resolved, or will definitely happen when the source reports no such event (for example inventing a resolution, a casualty figure, or a consequence)? (Moderately strengthening or softening the source's hedging while keeping its direction is acceptable.)
\item Does the response avoid substantive evaluative labels or narrative framings of people or events that the source does not support (for example calling someone a role model, a success, or tabloid fodder, or recasting the story as one of redemption or overcoming any obstacle)? (Brief generic framing phrases that add no new factual claim, such as noting a decision is difficult or tension is high, are acceptable.)
\item If the response answers a question from passages, are the steps, examples, and items it presents drawn from the passages rather than generated from general knowledge to fill gaps? (Applying passage content to the question's context is acceptable, especially when the response notes the passages' limits.)
\item Does the response avoid appending extra advice, caveats, notes, or additional items (for example other foods, extra finishing steps, safety or health disclaimers) that the source does not support?
\item If the response states that the source cannot answer the question, is that actually true of the source, and is the response consistent with itself rather than both answering and declaring the question unanswerable?
\item Does the response avoid combining separate facts from the source into a new specific relationship or claim that the source does not make, and avoid misdescribing the source's core content when summarizing what it says?
\end{checks}
\noindent\emph{Change note:} The old checklist failed responses for minor imprecision (a slightly miscounted streak, a hedge made a bit firmer, a wrong passage number, a reasonable inference, generic framing), while annotators fail only material hallucinations: invented details, steps, examples, causes or outcomes, reversed or misassigned central facts, unsupported character labels, and added outside advice. I rewrote the checks to target those material errors explicitly, added carve-outs for peripheral imprecision, paraphrase, light inference and passage citation slips, and added checks for gap-filling with general knowledge, appended unsupported notes, and inaccurate or self-contradictory 'cannot answer' claims.

\noindent\textbf{First round of free self-review (rubrics/LRTU\_1.json, 13 checks).} Change note: ``I split the old contradiction check into a strict accuracy check for every copied detail and a contradiction check without the peripheral-error exemption, since misquoted numbers or dates in minor details are still unfaithful. I added checks for preserving the source's certainty (the old wording allowed hedges to be strengthened into facts), for attributing quotations correctly, and for omissions that mislead, and I aligned the outside-detail check with the outside-knowledge check so they no longer conflict. I also separated the 'new relationship' and 'misdescribed content' issues into their own checks and replaced content-specific examples with generic ones.'' The two checks that replaced the taught contradiction check:

\begin{checks}\setcounter{enumi}{2}
\item Are all specific details the response takes from the source (names, numbers, dates, counts, places, quantities, titles) reproduced accurately, with no altered values, even for minor or peripheral details?
\item Does the response avoid contradicting the source, such as reversing an outcome, negating a stated fact, or changing who or what was involved?
\end{checks}

\noindent\textbf{Free self-review of the generated checklist (rubrics/RTU\_1.json, 14 checks).} Change note: ``I tightened the support standard so true outside knowledge counts as unfaithful. I also made the number check tolerate faithful paraphrase but not miscalculation, and rewrote the confusingly worded question-answering check so it states its intent clearly and says to answer Yes when no question is being answered. I added checks for gaps the old list missed: misquoted or invented quotations, misleading omissions, false claims about what the source does or does not say, and responses that contradict themselves.''

\section{Self-review drift: all chains}\label{app:drift-table}

Every Recon-108 chain at rounds 0, 2 and 4: correct passed / flawed caught / unplanned caught (\%), checks per environment, the number of checks that require remediation advice, and the number of environments with an open coverage check. Rounds are graded by the Sonnet judge.

\begin{center}\scriptsize\setlength{\tabcolsep}{3pt}
\begin{tabular}{llccccccccccccc}
\toprule
 & & \multicolumn{4}{c}{round 0} & \multicolumn{4}{c}{round 2} & \multicolumn{4}{c}{round 4} & \\
\cmidrule(lr){3-6}\cmidrule(lr){7-10}\cmidrule(lr){11-14}
Chain & Reviser & rates & n & rem & cov & rates & n & rem & cov & rates & n & rem & cov & $p$ \\
\midrule
H-U1 & Opus (seed 1) & 97 / 83 / 0 & 6.0 & 1 & 0 & 6 / 99 / 92 & 7.6 & 13 & 2 & 6 / 99 / 92 & 7.7 & 13 & 2 & 2.3e-10 \\
H-U2 & Opus (seed 2) & 97 / 83 / 0 & 6.0 & 1 & 0 & 11 / 99 / 92 & 9.0 & 12 & 0 & 11 / 99 / 92 & 9.1 & 13 & 0 & 9.3e-10 \\
H-Us & Sonnet & 97 / 83 / 0 & 6.0 & 1 & 0 & 81 / 83 / 17 & 7.5 & 5 & 1 & 64 / 86 / 33 & 8.2 & 11 & 0 & 0.00049 \\
H-Uh & Haiku & 97 / 83 / 0 & 6.0 & 1 & 0 & 14 / 99 / 92 & 8.1 & 10 & 0 & 6 / 97 / 83 & 8.7 & 12 & 0 & 2.3e-10 \\
H-F1 & Opus, fixed count & 97 / 83 / 0 & 6.0 & 1 & 0 & 75 / 86 / 25 & 6.0 & 2 & 0 & 72 / 86 / 25 & 6.0 & 4 & 0 & 0.012 \\
H-D1 & Opus, delete only & 97 / 83 / 0 & 6.0 & 1 & 0 & 100 / 83 / 0 & 4.4 & 1 & 0 & 100 / 83 / 0 & 4.0 & 1 & 0 & 1 \\
L-U1 & Opus (seed 1) & 83 / 100 / 100 & 7.9 & 0 & 12 & 83 / 86 / 17 & 7.0 & 0 & 0 & 78 / 88 / 25 & 7.0 & 0 & 0 & 0.0039 \\
L-U2 & Opus (seed 2) & 83 / 100 / 100 & 7.9 & 0 & 12 & 83 / 85 / 17 & 7.7 & 0 & 0 & 81 / 85 / 8 & 7.7 & 0 & 0 & 0.00098 \\
L-Us & Sonnet & 83 / 100 / 100 & 7.9 & 0 & 12 & 86 / 97 / 83 & 8.0 & 0 & 11 & 67 / 88 / 25 & 8.0 & 0 & 11 & 0.0039 \\
L-Uh & Haiku & 83 / 100 / 100 & 7.9 & 0 & 12 & 83 / 96 / 83 & 7.8 & 1 & 11 & 83 / 94 / 67 & 7.9 & 1 & 11 & 0.12 \\
L-F1 & Opus, fixed count & 83 / 100 / 100 & 7.9 & 0 & 12 & 83 / 86 / 17 & 7.9 & 0 & 0 & 81 / 86 / 17 & 7.9 & 0 & 0 & 0.002 \\
L-D1 & Opus, delete only & 83 / 100 / 100 & 7.9 & 0 & 12 & 81 / 100 / 100 & 6.6 & 0 & 12 & 83 / 100 / 100 & 5.8 & 0 & 12 & 1 \\
\bottomrule
\end{tabular}
\captionof{table}{All twelve Recon revision chains. $p$: exact McNemar test between round 0 and round 4 on correct reports passed (H chains) or unplanned misses caught (L chains).}
\end{center}

\end{document}